\pdfoutput=1

\documentclass{article}
\usepackage{colm2024_conference}

\usepackage{url}
\usepackage{booktabs}
\usepackage{hyperref}
\usepackage{graphicx}
\usepackage{microtype}
\usepackage{multicol}
\usepackage{xcolor}   
\usepackage{multirow}
\usepackage{lipsum}
\usepackage{enumitem}
\usepackage{pifont}
\usepackage{colortbl}
\usepackage{tabularx}
\usepackage{caption} 
\usepackage{float}
\usepackage{amsmath,amsfonts,amssymb,bbm}
\usepackage{amsthm}
\usepackage{algorithm}
\usepackage{algpseudocode}
\usepackage{subfig}
\usepackage{graphicx}

\newtheorem{proposition}{Proposition}

\makeatletter
\def\@fnsymbol#1{\ensuremath{\ifcase#1\or \dagger \or  \ddagger\or
   \mathsection\or  \text{*}\or \mathparagraph \or  \| \or **\or \dagger\dagger
   \or \ddagger\ddagger \else\@ctrerr\fi}}
\makeatother
\renewcommand{\thefootnote}{\fnsymbol{footnote}}

\title{Breaking Entropy Bounds: Accelerating RL Training via MTP with Rejection Sampling}

\author{
\textbf{%
Yucheng Li\footnotemark[1] \quad Huiqiang Jiang$^{\dagger\ddagger}$ \quad Yang Xu \quad Jianxin Yang \quad Yi Zhang \\ Yizhong Cao \quad Yuhao Shen \quad Fan Zhou \quad Rui Men \quad Jianwei Zhang \quad An Yang \quad Bowen Yu \quad Bo Zheng \quad Fei Huang \quad Junyang Lin \quad Dayiheng Liu \quad Jingren Zhou} \\
\vspace{1.5mm}
Qwen Team, Alibaba Inc.
}

\begin{document}
\maketitle

\footnotetext[1]{Equal contribution. $^{\ddagger}$ Corresponding author.}
\renewcommand{\thefootnote}{\arabic{footnote}}
\setcounter{footnote}{0}

\begin{abstract}

Reinforcement learning (RL) has become a key component in modern large language models, yet the rollout stage remains the key bottleneck in RL training pipelines.
Although Multi-Token Prediction (MTP) offers a natural solution to accelerate rollouts through speculative decoding, many studies have observed that MTP acceptance rates degrade significantly during RL training, leading to limited speedup performance.
To address this bottleneck, we present \textbf{\textit{Bebop}}, a systematic study of MTP in LLM post-training, and offer practical recipes to integrate MTP into large-scale RL pipelines.
First, we reveal that the MTP acceptance rate is fundamentally bounded by the fluctuation of model entropy, which demonstrates a clear negative linear relationship with the rise of entropy in the RL stage (\S\ref{sec:formulation}).
Second, we show that \textbf{probabilistic rejection sampling} largely alleviates the disturbance introduced by entropy in RL compared to greedy draft sampling. We further identify that the conventional MTP training objectives (cross-entropy or KL) are suboptimal in such settings, and therefore we propose a novel \textbf{end-to-end TV loss} that directly optimizes multi-step rejection sampling acceptance rate, yielding ${\sim}10\%$ acceptance rate improvements, achieving up to 95\% acceptance rates and up to 25\% extra inference throughput gains across mathematical reasoning, code generation, and agentic tasks (\S\ref{sec:mtp_training}).
Third, we test various online MTP training strategies during RL and show that pre-RL MTP training with e2e TV loss and rejection sampling achieves a consistent acceptance rate and speedup throughout the entire RL, eliminating the need for costly online MTP updating (\S\ref{sec:mtp_adapt}).
We provide extensive experiments and analysis that validate our findings. Experimental results show our method achieves up to $1.8\times$ end-to-end acceleration in async RL training of Qwen3.5, Qwen3.6, and Qwen3.7 models.

\end{abstract}

\begin{figure*}[htb]
    \vspace{-14pt}
    \centering
    \subfloat[Entropy vs.\ Accept Length]{
      \label{sfig:entropy_vs_accept_length}
      \includegraphics[width=0.5\columnwidth]{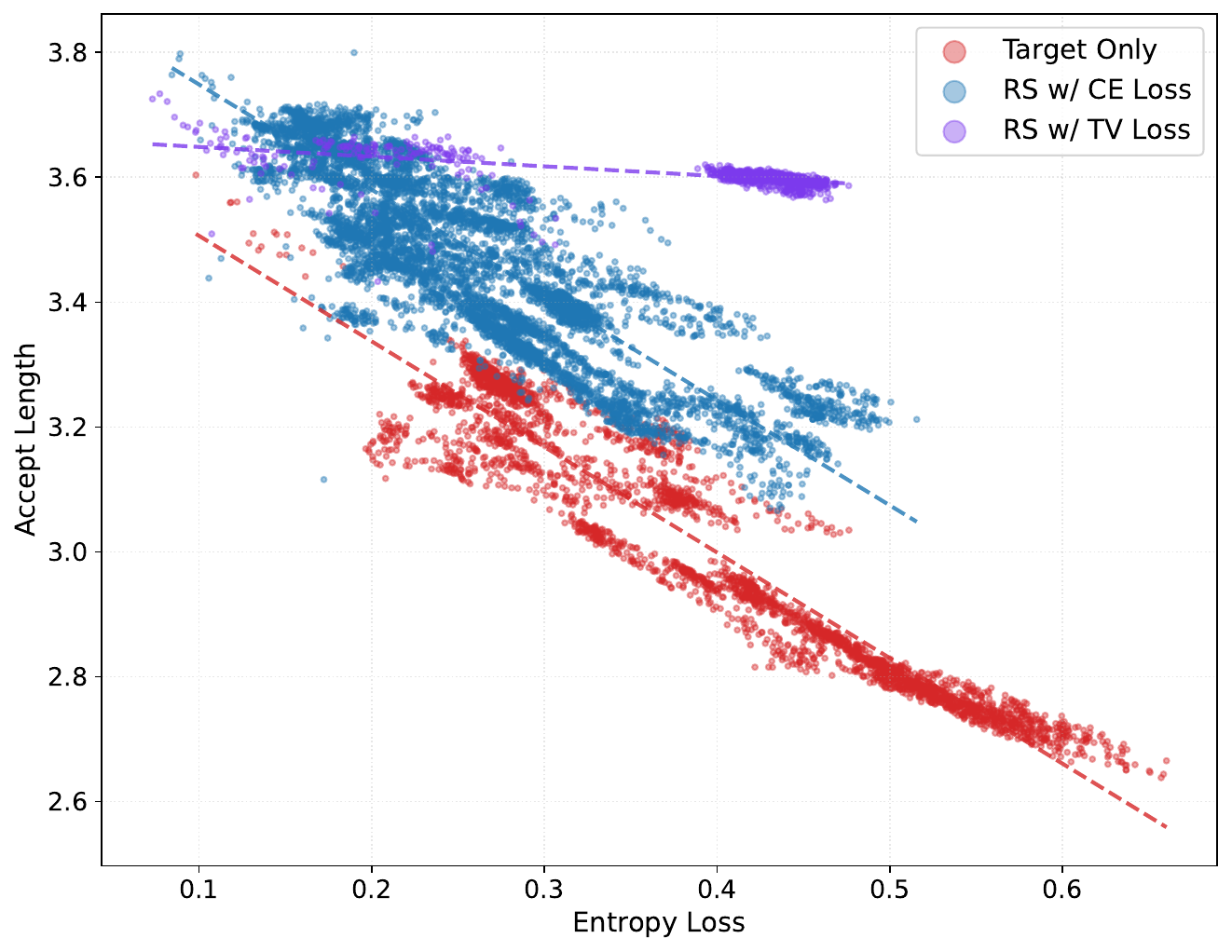}}
    \subfloat[Draft/Target Distribution]{
      \label{sfig:draft_target_distribution}
      \includegraphics[width=0.3\columnwidth]{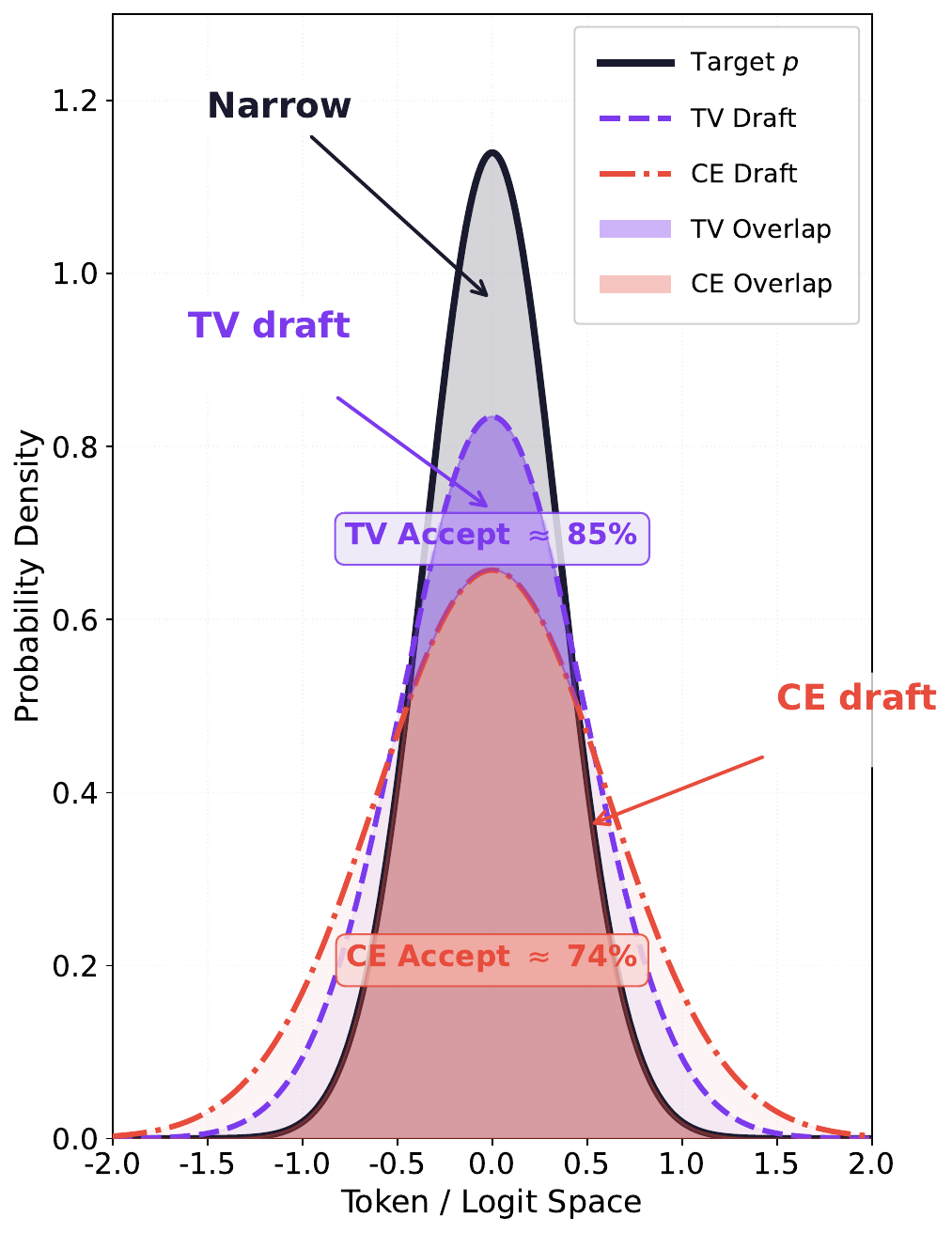}}
    \caption{
        (a) MTP acceptance rates degrade linearly with policy entropy fluctuation in RL; training MTP with our novel e2e TV loss largely eliminates this entropy dependence under rejection sampling.
        Each point represents the mean entropy and accept length at one RL step across different-size Qwen3.5, 3.6 and 3.7 training runs in various tasks.
        (b) The TV-trained MTP achieves substantially better distributional overlap with the policy model, yielding superior acceptance rate and speedup.
    }
    \label{fig:performance_gain}
\end{figure*}
  
\section{Introduction}
\label{sec:intro}

Reinforcement learning (RL) has become a key paradigm in modern large language model (LLM) training~\citep{gpt55,claude5,qwen37,deepseek-v4,glm5.1,k2.6,minimax2.5}. However, RL training for LLMs is computationally expensive, with the end-to-end time heavily dominated by inference rollouts in both single- and multi-turn settings. Although recent progress in asynchronous RL frameworks~\citep{fu2025areal,wang2025roll,slime2025} can partially alleviate long-tail latency issues, rollout costs remain the primary bottleneck in RL training.
Multi-Token Prediction (MTP) has recently gained prominence as a scalable speculative decoding paradigm to accelerate LLM inference \citep{dpsk-v3,qwen3.5}.
This naturally raises the question: can MTP be effectively leveraged to accelerate RL training for LLMs?

We conduct extensive experiments and show that using MTP directly in RL training often suffers from a significant decline in acceptance rates and therefore leads to limited speedup. Specifically, there are two factors that may affect MTP acceptance rates during RL: 1) to encourage exploration, the policy model often maintains a rather large entropy--or even shows a gradually increasing entropy curve, which makes it harder to predict draft tokens, degrading the acceptance rate; 2) the weight updates of the policy model cause distribution mismatch between the policy model and the MTP module (frozen in RL training), that may affect the acceptance rate. Through our theoretical analysis and empirical decomposition (\S\ref{sec:formulation}), we show that \textbf{entropy is the dominant factor} driving acceptance rate degradation, while the mismatch introduced by policy updates remains negligible (Fig.~\ref{fig:decomposition_delta}). To tackle the entropy bound challenge and ensure the speedup of MTP, recent works~\citep{chen2025respec,li2025mtp,minimax2026forge} have proposed online MTP training during RL to mitigate this degradation, yet this approach introduces significant memory and latency overhead and yields limited improvements in many RL tasks.

In this paper, we introduce \textbf{\textit{Bebop}}\footnote{\textbf{B}reaking \textbf{E}ntropy \textbf{B}ounds for \textbf{O}ptimal \textbf{P}rediction} and show that using \textbf{probabilistic rejection sampling}\footnote{We have released our implementation at \url{https://github.com/sgl-project/sglang/pull/26312}.} instead of the common greedy target-only sampling\footnote{Target-only sampling means the verification of speculative decoding uses only target probability without caching draft probability: it selects draft tokens via $\arg\max$ and uses $1$ as $q(\hat{y})$ in the verification.} largely mitigates the acceptance rate degradation driven by policy entropy fluctuation (\S\ref{sec:mtp-with-rs}) and provides a large improvement in acceptance rate. The key insight is that target-only acceptance is fundamentally capped by $\max_y p(y)$, which decreases directly as entropy rises, whereas rejection sampling acceptance equals the full distributional overlap $\sum_v \min(p(v), q(v))$ and is therefore much less sensitive to entropy shifts. We further identify that existing MTP training objectives, such as cross-entropy (CE) or KL divergence, are suboptimal for rejection sampling: CE/KL only indirectly improve the distributional overlap that determines rejection sampling acceptance. This motivates us to propose a novel \textbf{end-to-end TV loss} that optimizes the joint multi-step overlap that directly improves rejection sampling acceptance rate.

\textit{Bebop} produces MTP models that maintain consistent acceptance rates throughout the entire RL training process. These rates remain largely invariant to entropy changes. \textit{Bebop} achieves this stability using only a lightweight pre-RL MTP training phase with an e2e TV loss, paired with rejection sampling during rollouts, eliminating the need for MTP co-training during RL.

Specifically, we make the following contributions:

\begin{itemize}[leftmargin=10mm, itemsep=2mm]

\item \textbf{Entropy Constraints on MTP Acceptance} (\S\ref{sec:formulation}).
We show that MTP acceptance rates are fundamentally constrained by the target model's entropy in RL training, exhibiting a clear negative linear relationship across diverse tasks and models.
We further show that rejection sampling largely improves the acceptance rate in RL, as its acceptance depends on policy-draft overlap and is less sensitive to entropy shifts.

\item \textbf{End-to-End TV Loss for MTP Training} (\S\ref{sec:mtp_training}).
We identify that CE/KL-trained MTP produces suboptimal results in rejection sampling, and thereby introduce a novel end-to-end TV loss that directly optimizes the multi-step rejection sampling acceptance rate.
We show that the e2e TV loss ensures stable training, produces inherently entropy-invariant MTP, and yields an extra ${\sim}10\%$ improvement in acceptance rate.

\item \textbf{MTP Adaptation Strategy for RL} (\S\ref{sec:mtp_adapt}).
We show that with a lightweight pre-RL MTP training with e2e TV loss and rejection sampling, our MTP module provides consistent acceptance rates throughout the entire RL training. The other factor, policy-draft mismatch driven by policy updates, is negligible, which eliminates the need for costly MTP online training during RL.

\item \textbf{Extensive Empirical Validation and Analysis} (\S\ref{sec:exp}, \S\ref{sec:discussion}).
Through large-scale experiments with Qwen3.5, 3.6, and 3.7 models on reasoning, coding, and various agentic tasks, we validate \textit{Bebop} and provide practical recipes for integrating MTP into RL pipelines, achieving up to $1.8\times$ end-to-end acceleration of async RL pipelines. We further analyze how TV loss shapes draft distributions, the robustness of rejection sampling under policy updates, and the effects of temperature and generation length on acceptance rates.

\end{itemize}
\section{Preliminaries}
\label{sec:background}

\subsection{Multi-Token Prediction and Speculative Decoding}

As an effective paradigm of speculative decoding~\citep{leviathan2023fast,chen2023accelerating}, Multi-Token Prediction (MTP) augments autoregressive LLMs with lightweight \textit{draft heads} that sequentially predict multiple future tokens~\citep{gloeckle2024bmtp,dpsk-v3,qwen3}.
Let $p(\cdot | x, y_{<t})$ denote the target (backbone) model's next-token distribution at position $t$, and $q(\cdot | x, y_{<t})$ denote the draft head's predicted distribution.
During inference, MTP operates in a \textit{draft-then-verify} paradigm: a chain of $\gamma$ draft heads sequentially proposes candidate tokens $\hat{y}_{t+1}, \ldots, \hat{y}_{t+\gamma}$, where each head takes the previous head's hidden state as input; the $\gamma$ candidates are then verified against the target model in a single forward pass.

The expected number of accepted tokens per verification step, which we call the \textit{acceptance length}, directly determines the inference throughput.
This acceptance length depends on the specific \textit{acceptance methods} used during verification, detailed in the following section.

\subsection{Acceptance Methods}

In speculative decoding, two acceptance methods are commonly used: \textit{Target-Only Sampling} and \textit{Rejection Sampling}. Fig.~\ref{sfig:rs_boundary} illustrates the acceptance rate distributions of representative models under each method.

\paragraph{Target-Only Sampling.}
Under target-only sampling, the draft token is selected greedily as $\hat{y} = \arg\max_y q(y)$ and accepted with probability $p(\hat{y})$, using only the target model's probability.
The single-step acceptance rate is:
\begin{align}
\label{equ:target_only}
\alpha^{\mathrm{TO}} = p(\hat{y}) = p\!\left(\arg\max_y\, q(y)\right).
\end{align}
If rejected, the output token is resampled from the residual distribution $p_{\mathrm{resid}}(y) \propto p(y)\,\mathbbm{1}[y \neq \hat{y}]$, ensuring the overall output distribution remains unbiased.
Notably, for draft models with relatively low acceptance rates, target-only sampling can yield higher throughput than rejection sampling, as the simpler acceptance criterion avoids the overhead of caching and computing the draft probability vectors.

\paragraph{Rejection Sampling.}
Under rejection sampling \citep{leviathan2023fast,chen2023accelerating}, a draft token $\hat{y} \sim q(\cdot)$ is accepted with probability $\min\!\left(1, \, p(\hat{y}) / q(\hat{y})\right)$.
The expected single-step acceptance rate is:
\begin{align}
\label{equ:reject_sample}
\alpha^{\mathrm{RS}} = \mathbb{E}_{\hat{y} \sim q}\!\left[\min\!\left(1,\;\frac{p(\hat{y})}{q(\hat{y})}\right)\right] = \sum_{y} \min\bigl(p(y),\;q(y)\bigr) = 1 - d_{\mathrm{TV}}(p, q),
\end{align}
where $d_{\mathrm{TV}}(p, q) = \frac{1}{2}\sum_{y} |p(y) - q(y)|$ is the Total Variation distance~\citep{levin2017markov}.
This method provides an \textit{unbiased} guarantee: the output distribution is exactly the target distribution $p$, regardless of the draft quality.

\subsection{Reinforcement Learning for LLMs}

We consider the standard RL framework for LLMs, where a policy $\pi_\theta$ (the LLM) generates trajectories $y$ to prompts $x \sim \mathcal{D}$ and receives scalar rewards $R(x, y)$.
We adopt GRPO~\citep{grpo}, which samples a group of $G$ trajectories $\{y_1, \ldots, y_G\}$ from the rollout policy $\pi_{\theta_{\mathrm{old}}}$ for each prompt, and optimizes the clipped surrogate objective:
\begin{align}
\mathcal{J}(\theta) = \mathbb{E}_{x \sim \mathcal{D}} \left[ \frac{1}{G} \sum_{i=1}^{G} \frac{1}{|y_i|} \sum_{t=1}^{|y_i|} \min\!\left( r_{i,t}\, \hat{A}_i,\; \mathrm{clip}(r_{i,t}, 1{-}\epsilon, 1{+}\epsilon)\, \hat{A}_i \right) \right],
\end{align}
where $r_{i,t} = \pi_\theta(y_{i,t} | x, y_{i,<t}) / \pi_{\theta_{\mathrm{old}}}(y_{i,t} | x, y_{i,<t})$ is the importance sampling ratio and $\hat{A}_i = (R(x, y_i) - \mu_G) / \sigma_G$ is the group-normalized advantage.

RL training for LLMs typically operates in a loop of three stages: (1) \textit{rollout} uses the current policy to generate trajectories in an inference engine, potentially involving multi-turn sandbox or tool interactions; (2) \textit{reward} evaluates these generated trajectories with a reward model or verifier; and (3) \textit{update} optimizes the policy inside a training engine using policy gradient methods.
The asynchronous RL or partial rollout frameworks are commonly adopted to mitigate the bubble overhead caused by long-tail trajectories during rollout~\citep{fu2025areal,wang2025roll,slime2025,qin2025seer,minimax2026forge}.
Despite asynchronous designs, the rollout stage remains the dominant computational bottleneck. While MTP offers a powerful acceleration paradigm to alleviate this burden, its direct application in RL environments exposes unique performance gaps that require further optimization.

\subsection{Degradation of MTP During RL Training}
\label{subsec:degradation}

\begin{figure}[t]
    \centering
    \includegraphics[width=\linewidth]{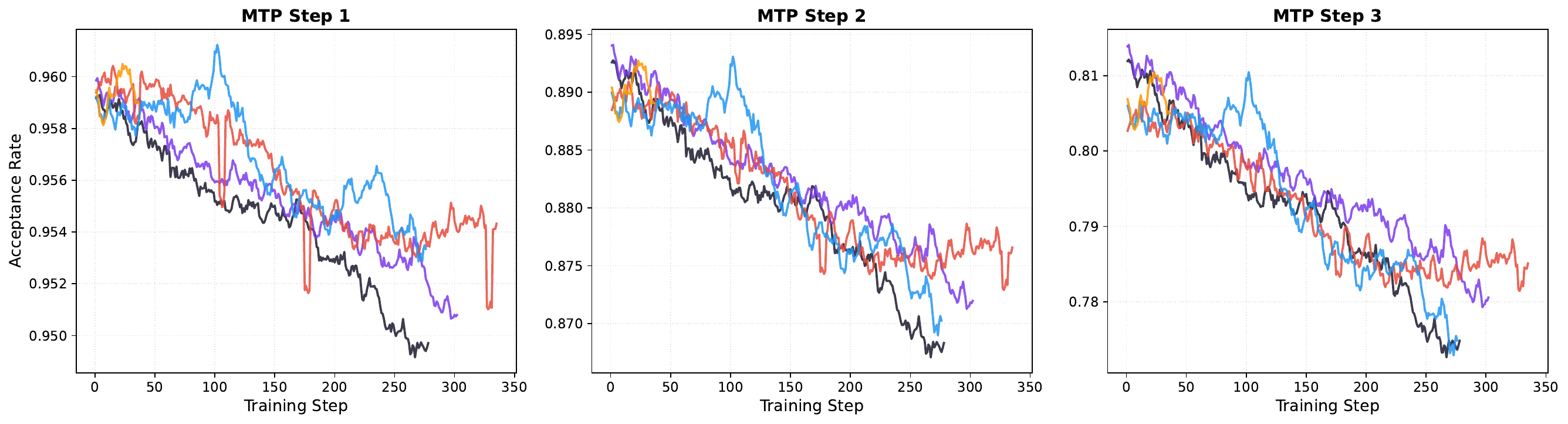}
    \caption{Per-step MTP acceptance rates during SWE-bench RL training with Qwen3.5-3.6 Plus. Each line represents a separate RL run.
    Later MTP steps exhibit progressively larger degradation: step~1 drops by 1.2\%, step~2 by 2.6\%, and step~3 by 3.5\% over the course of training.}
    \label{fig:mtp_position_acc}
\end{figure}

During RL training, MTP acceptance rates degrade significantly across prediction steps. As shown in Fig.~\ref{fig:mtp_position_acc}, later steps experience progressively larger drops. The per-step acceptance rate decline ranges from 1.2\% at step~1 to 3.5\% at step~3.

Recent work~\citep{minimax2026forge,chen2025respec,li2025mtp} primarily attributes this degradation to distribution mismatch. Specifically, a gap emerges between the static draft predictions $q = q_\phi(\cdot \mid x, y_{<t})$ and the evolving target distribution $p = \pi_\theta(\cdot \mid x, y_{<t})$ because backbone weight updates leave the draft heads behind. While this mismatch exists, we argue that this perspective is incomplete. We identify shifts in the target model's entropy $\mathcal{H}(p)$ during RL training as another fundamental driver. These entropy shifts inherently alter the achievable acceptance bounds regardless of draft accuracy. These two factors compound through the multi-step acceptance structure:
\begin{enumerate}
    \item \textbf{Single-step degradation}: 
    The per-token acceptance rate $\alpha_i$ continuously decreases as the TV distance $d_{\mathrm{TV}}(p, q)$ grows, driven by the persistent divergence between the draft-target distribution.
    \item \textbf{Multi-step compounding}: For $\gamma$-step MTP, the expected acceptance length involves products of per-step acceptance rates, so degradation compounds multiplicatively:
    $\mathbb{E}[L] = \sum_{j=1}^{\gamma} \prod_{i=1}^{j} \alpha_i$.
\end{enumerate}

Crucially, our decomposition analysis in \S\ref{sec:formulation} and Fig.~\ref{fig:decomposition_delta} challenges the conventional mismatch-centric view. We demonstrate that the entropy-driven component actually dominates the acceptance rate fluctuation during RL training. The distribution mismatch component remains comparatively small. This key insight reshapes our understanding of MTP degradation and directly motivates our subsequent optimization strategy.

\section{Target Entropy Constraints on MTP Acceptance}
\label{sec:formulation}

In this section, we analyze how the target model's entropy fundamentally constrains MTP acceptance rates, which explains the acceptance rate degradation driven by entropy shifts during RL training. This further motivates our training objectives in \S\ref{sec:mtp_training}.

\subsection{Formulation}

Consider a fixed position $t$ in the generation process.
Let $p \in \Delta^{|\mathcal{V}|}$ denote the target model's next-token distribution and $q \in \Delta^{|\mathcal{V}|}$ denote the draft head's distribution, where $\mathcal{V}$ is the vocabulary.
We define the \textit{target entropy} as:
\begin{align}
\mathcal{H}(p) = -\sum_{v \in \mathcal{V}} p(v) \log p(v),
\end{align}
which measures the uncertainty of the target model's prediction.
A low entropy indicates a confident, peaked distribution, while a high entropy indicates a spread-out distribution.

We are interested in understanding how $\mathcal{H}(p)$ constrains the achievable acceptance rate $\alpha^{\mathrm{TO}}$ and $\alpha^{\mathrm{RS}}$ defined in Eq.~\eqref{equ:target_only} and~\eqref{equ:reject_sample}.

\subsection{MTP with Target-Only Sampling}

Under target-only sampling, the acceptance rate depends on how well the draft's greedy prediction $\hat{y} = \arg\max_y q(y)$ aligns with the target's high-probability region.
When the target entropy $\mathcal{H}(p)$ is low (i.e., $p$ is peaked on a few tokens), even a moderately accurate draft model can achieve high acceptance by placing mass on the dominant tokens.
Conversely, when $\mathcal{H}(p)$ is high, the target distribution spreads over many tokens, reducing $\max_y p(y)$ and increasing the probability of ranking errors.

\begin{proposition}[Entropy-Dependent Acceptance under Target-Only Sampling]
\label{prop:to_entropy}
For a well-trained draft model, $\alpha^{\mathrm{TO}} = \max_y p(y)$, which is a monotonically decreasing function of $\mathcal{H}(p)$, lower-bounded by $\exp(-\mathcal{H}(p))$, and empirically well-approximated as linear (Fig.~\ref{sfig:entropy_vs_accept_length}):
\begin{align}
\label{equ:to_linear}
\alpha^{\mathrm{TO}} \approx a^{\mathrm{TO}} - b^{\mathrm{TO}} \cdot \mathcal{H}(p),
\end{align}
with positive constants $a^{\mathrm{TO}}, b^{\mathrm{TO}}$.
Ranking errors under imperfect drafts steepen the slope but preserve linearity (\S\ref{subsec:to_analysis}).
\end{proposition}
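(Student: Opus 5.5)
The first step is to make the phrase ``well-trained draft model'' precise. I will take it to mean that the draft's greedy token coincides with the target's mode, $\arg\max_y q(y) = \arg\max_y p(y)$. Substituting this into Eq.~\eqref{equ:target_only} immediately gives $\alpha^{\mathrm{TO}} = p(\arg\max_y p(y)) = \max_y p(y) =: m(p)$. Everything else is then a statement about the relation between the mode probability $m(p)$ and the Shannon entropy $\mathcal{H}(p)$.

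\textbf{Lower bound.} Every $v$ satisfies $\log p(v) \le \log m(p)$. Averaging this inequality under $p$ gives
\[
\mathcal{H}(p) = -\sum_v p(v)\log p(v) \ge -\log m(p).
\]
This is the statement that the min-entropy $-\log m(p)$ never exceeds the Shannon entropy, and it is equivalent to $m(p) \ge \exp(-\mathcal{H}(p))$.

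\textbf{Monotonicity.} Here I have to be careful, because $m(p)$ is not literally a function of $\mathcal{H}(p)$ alone: distributions with the same entropy can have different modes. I will therefore read ``monotonically decreasing'' in the envelope sense. From the other side I would use Fano's inequality in the form
\[
\mathcal{H}(p) \le h\bigl(1-m(p)\bigr) + \bigl(1-m(p)\bigr)\log\bigl(|\mathcal{V}|-1\bigr),
\]
where $h$ is the binary entropy. The right-hand side is increasing in $1-m(p)$ on the relevant range $m \ge 1/|\mathcal{V}|$, so it can be inverted to give an upper bound $m(p) \le g(\mathcal{H}(p))$ with $g$ decreasing. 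Thus $m(p)$ is sandwiched between two decreasing functions of $\mathcal{H}(p)$, namely $\exp(-\mathcal{H})$ and $g(\mathcal{H})$. Both envelopes equal $1$ at $\mathcal{H}=0$ and fall as $\mathcal{H}$ grows, which is the rigorous content of ``monotonically decreasing.'' This reconciliation of a non-functional relationship with the word ``function'' is the step I expect to be the main obstacle. I would state it explicitly as an envelope result, or alternatively as exact monotonicity within a one-parameter family such as temperature-scaled softmaxes $p_\tau \propto p^{1/\tau}$. Along such a family both $m(p_\tau)$ and $\mathcal{H}(p_\tau)$ are monotone in $\tau$, so $m$ is genuinely a decreasing function of $\mathcal{H}$ there.

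\textbf{Approximate linearity.} Linearity cannot be proven and is asserted as empirical. The argument I would give is a first-order expansion: in RL the entropy fluctuates within a narrow band around an operating point $\mathcal{H}_0$. Linearizing the decreasing envelope there yields Eq.~\eqref{equ:to_linear}. For instance, from the lower bound one gets $a^{\mathrm{TO}} \approx e^{-\mathcal{H}_0}(1+\mathcal{H}_0)$ and $b^{\mathrm{TO}} \approx e^{-\mathcal{H}_0}$, both positive, and the final fit is deferred to Fig.~\ref{sfig:entropy_vs_accept_length}.

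\textbf{Imperfect drafts.} I would write $\alpha^{\mathrm{TO}} = m(p) - \Delta$, where $\Delta = m(p) - p(\hat{y}) \ge 0$ is the ranking-error gap. The probability of a ranking error grows as the top probabilities of $p$ flatten, that is, as entropy rises. Modelling $\mathbb{E}[\Delta]$ as approximately $c\,\mathcal{H}$ to first order adds $c$ to the slope $b^{\mathrm{TO}}$ while keeping the relation linear. Details of this step are deferred to \S\ref{subsec:to_analysis}.
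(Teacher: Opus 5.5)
Your proposal is correct, and its skeleton matches the paper's. Top-1 agreement reduces $\alpha^{\mathrm{TO}}$ to $\max_y p(y)$. The min-entropy bound gives $\max_y p(y)\ge\exp(-\mathcal{H}(p))$: the paper invokes Jensen on the concave logarithm, which passes through $\sum_v p(v)^2\ge e^{-\mathcal{H}(p)}$, while your pointwise bound averaged under $p$ reaches the same fact more directly. A first-order Taylor expansion then gives the linear form with $b^{\mathrm{TO}}\sim e^{-\bar{\mathcal{H}}}$, which is exactly the paper's order-of-magnitude estimate. Finally, the ranking-error deficit is argued heuristically to grow with entropy and steepen the slope.

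You genuinely differ on the monotonicity claim. The paper simply posits $\alpha^{\mathrm{TO}}=f(\mathcal{H})$ for a smooth decreasing $f$ and expands it. You correctly note that $\max_y p(y)$ is not a function of $\mathcal{H}(p)$ at all. For instance, $(0.5,0.5)$ has entropy $\log 2\approx0.69$ and mode $0.5$, while $(0.6,0.1,0.1,0.1,0.1)$ has entropy $\approx1.23$ and mode $0.6$. You replace the assertion with either a Fano envelope or exact monotonicity along the tempered family $p_\tau\propto p^{1/\tau}$, and both check out:
\begin{itemize}
\item $h(\varepsilon)+\varepsilon\log(|\mathcal{V}|-1)$ is increasing in $\varepsilon=1-m(p)$ precisely when $m(p)>1/|\mathcal{V}|$.
\item Along the tempered family, with $\beta=1/\tau$, $d\mathcal{H}/d\beta=-\beta\,\mathrm{Var}_\beta(\log p)\le0$ while $d\log m/d\beta=\max_v\log p(v)-\mathbb{E}_\beta[\log p]\ge0$.
\end{itemize}

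The paper's version is shorter and yields a single slope $-f'(\bar{\mathcal{H}})$, but it rests on a functional dependence that is false for arbitrary $p$. Yours states what is actually provable. The price is that the Taylor step now linearizes an envelope or a one-parameter curve rather than $\alpha^{\mathrm{TO}}$ over all distributions.

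Of your two options, the temperature family is the more informative one. The Fano envelope is very loose at realistic vocabulary sizes: with $|\mathcal{V}|\approx1.5\times10^5$ and $\mathcal{H}=1$ nat, it only confines $m(p)$ to roughly $[0.37,0.94]$. The tempered curve, by contrast, is a genuine decreasing $f$, and it matches how the paper itself relates temperature to entropy in \S\ref{subsec:temperature}.
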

\begin{proof}[Proof sketch]
When the draft correctly identifies the target's top-1 token ($\arg\max q = \arg\max p$), the acceptance rate reduces to $\alpha^{\mathrm{TO}} = \max_y p(y)$.
By Jensen's inequality applied to the concave logarithm, $\log(\max_y p(y)) \geq -\mathcal{H}(p)$, i.e., $\max_y p(y) \geq \exp(-\mathcal{H}(p))$.
Writing $\alpha^{\mathrm{TO}} = f(\mathcal{H})$ for some smooth decreasing function $f$ and performing a first-order Taylor expansion around a reference entropy $\bar{\mathcal{H}}$:
\begin{align}
\alpha^{\mathrm{TO}} \approx \underbrace{\bigl[f(\bar{\mathcal{H}}) - f'(\bar{\mathcal{H}})\bar{\mathcal{H}}\bigr]}_{a^{\mathrm{TO}}} + \underbrace{f'(\bar{\mathcal{H}})}_{-b^{\mathrm{TO}}} \cdot \mathcal{H}(p).
\end{align}
Since $f$ is decreasing, $b^{\mathrm{TO}} = -f'(\bar{\mathcal{H}}) > 0$.
See \S\ref{subsec:to_analysis} for the full derivation including imperfect draft corrections.
\end{proof}

This linear relationship is remarkably robust across different model sizes, tasks, and training stages, as shown in Fig.~\ref{sfig:entropy_vs_accept_length}.

\subsection{MTP with Rejection Sampling}
\label{sec:mtp-with-rs}

Under rejection sampling, the acceptance rate equals the TV overlap between $p$ and $q$ (Eq.~\eqref{equ:reject_sample}).
We can decompose the TV distance using the identity $|a - b| = a + b - 2\min(a, b)$ and probability normalization:
\begin{align}
d_{\mathrm{TV}}(p, q) = \frac{1}{2}\sum_{v} \bigl(p(v) + q(v) - 2\min(p(v), q(v))\bigr) = 1 - \sum_{v} \min\bigl(p(v), q(v)\bigr).
\end{align}
Therefore, maximizing the acceptance rate is equivalent to minimizing the TV distance:
\begin{align}
\label{equ:rs_tv}
\alpha^{\mathrm{RS}} = 1 - d_{\mathrm{TV}}(p, q).
\end{align}

As a result, the acceptance rate is no longer bounded by the policy's entropy directly. However, empirical results show that the connection to entropy remains after switching to rejection sampling. In our further investigation, we find that under CE/KL-trained draft models, even small per-token mismatches accumulate when $p$ has high entropy, leading to a larger TV distance.
This motivates our deeper analysis of how the training objective affects this relationship as follows.

\begin{proposition}[Entropy-Dependent Acceptance under CE/KL-Trained Rejection Sampling]
\label{prop:rs_linear}
Under CE/KL-trained draft models, the rejection sampling acceptance rate satisfies:
\begin{align}
\label{equ:rs_linear}
\alpha^{\mathrm{RS}} \approx a^{\mathrm{RS}} - b^{\mathrm{RS}} \cdot \mathcal{H}(p),
\end{align}
with positive constants $a^{\mathrm{RS}}, b^{\mathrm{RS}}$, where $b^{\mathrm{RS}}$ is comparable to $b^{\mathrm{TO}}$ though empirically slightly steeper (\S\ref{subsec:ce_rs}, Fig.~\ref{fig:entropy_vs_accept}).
\end{proposition}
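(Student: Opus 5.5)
The plan is to treat this the way Proposition~\ref{prop:to_entropy} was treated: first find an exact expression for $\alpha^{\mathrm{RS}}$ as a function of how the CE/KL-trained draft errs, then show this function decreases in $\mathcal{H}(p)$, and finally linearize it around a reference entropy. The starting point is Eq.~\eqref{equ:rs_tv}, $\alpha^{\mathrm{RS}} = 1 - d_{\mathrm{TV}}(p,q)$. The question therefore becomes how the TV distance of a CE/KL-trained draft scales with $\mathcal{H}(p)$. I will model the draft as a perturbation of the target, $q = p + \delta$ with $\sum_v \delta(v) = 0$. CE/KL training minimizes $\mathrm{KL}(p\|q)$, whose local form is the $\chi^2$-type quantity $\tfrac12\sum_v \delta(v)^2/p(v)$. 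This penalizes errors \emph{relative} to $p(v)$, so a residual error of roughly constant relative size $\epsilon$ per token, $|\delta(v)| \approx \epsilon\, p(v)$, on the tokens the draft resolves imperfectly is the natural fixed point of capacity-limited CE/KL training.

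In the second step I will bound and estimate $d_{\mathrm{TV}} = \tfrac12\sum_v|\delta(v)|$ under this model. Pinsker gives $d_{\mathrm{TV}} \le \sqrt{\mathrm{KL}(p\|q)/2}$, but that bound carries no entropy dependence by itself. The entropy has to enter through \emph{which} tokens carry error. A confident draft recovers the top-1 token almost exactly, so the error mass sits on the tail $1-\max_y p(y)$, and each tail token contributes roughly $\epsilon\,p(v)$. This gives
\begin{align}
d_{\mathrm{TV}}(p,q) \approx \tfrac{\epsilon}{2}\bigl(1 - \max_y p(y)\bigr) + \tfrac{\epsilon'}{2}\,\bigl|\mathrm{supp}_{\mathrm{eff}}(p)\bigr| \cdot \eta ,
\end{align}
where the second term accounts for a floor error $\eta$ per effectively supported token. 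Such a floor arises because CE/KL under-weights tiny $p(v)$, and the effective support size is $\approx \exp(\mathcal{H}(p))$. The first term is controlled by Proposition~\ref{prop:to_entropy}: $1-\max_y p(y)$ is increasing in $\mathcal{H}(p)$ and approximately linear in it. The second term is increasing in $\mathcal{H}(p)$ and, over the empirically observed entropy range, also approximately linear after a first-order expansion of $\exp(\mathcal{H})$ around $\bar{\mathcal{H}}$. Adding them, $\alpha^{\mathrm{RS}} = g(\mathcal{H}(p))$ with $g$ decreasing.

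The third step repeats the Taylor argument from the proof of Proposition~\ref{prop:to_entropy}. Setting $a^{\mathrm{RS}} = g(\bar{\mathcal{H}}) - g'(\bar{\mathcal{H}})\bar{\mathcal{H}}$ and $b^{\mathrm{RS}} = -g'(\bar{\mathcal{H}}) > 0$ yields Eq.~\eqref{equ:rs_linear}.

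For the comparison with $b^{\mathrm{TO}}$, I will write $b^{\mathrm{RS}} = \tfrac{\epsilon}{2} b^{\mathrm{TO}} + (\text{support-growth term})$. The support term is what can make the slope comparable to, or slightly steeper than, $b^{\mathrm{TO}}$ even though $\epsilon<1$. I will state this only qualitatively, deferring to the empirical fit in \S\ref{subsec:ce_rs}.

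The main obstacle is the second step, because the claim is an approximation, not an identity. The error model $|\delta(v)|\propto p(v)$ plus an entropy-scaled floor has to be justified as what CE/KL training actually produces. To do that, I will argue from the first-order optimality of the KL objective under a capacity constraint: the gradient $-p(v)/q(v)$ equalizes relative errors. I will then label the linearity and slope comparison as holding to first order within the observed entropy band, supported by Fig.~\ref{fig:entropy_vs_accept}, rather than as a global theorem.
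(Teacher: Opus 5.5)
Your outer frame is the paper's: write $\alpha^{\mathrm{RS}}=1-d_{\mathrm{TV}}(p,q)$, show that $d_{\mathrm{TV}}$ grows with $\mathcal{H}(p)$, then Taylor-expand. The problem is the mismatch model you put inside it, which cannot produce the claimed slope.

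You conclude that CE/KL training leaves a constant \emph{relative} residual, $|\delta(v)|\approx\epsilon\,p(v)$. That is exactly the probability-proportional mismatch the paper assigns to \emph{TV}-trained drafts (\S\ref{subsec:tv_rs}). There it is used to show that $d_{\mathrm{TV}}$ is bounded independently of entropy (Proposition~\ref{prop:tvd_entropy}). If CE/KL produced the same structure, the paper's explanation of the CE-versus-TV slope gap would have nothing left to rest on.

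In your version, entropy re-enters only through the extra assumption that the top-1 token is matched exactly. The resulting slope is $\tfrac{\epsilon}{2}b^{\mathrm{TO}}<\tfrac{1}{2}b^{\mathrm{TO}}$, a small fraction of $b^{\mathrm{TO}}$ for any decent draft. So this term cannot make $b^{\mathrm{RS}}$ comparable to $b^{\mathrm{TO}}$, let alone slightly steeper.

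The derivation of the constant relative residual is also not sound. The local form $\tfrac{1}{2}\sum_v\delta(v)^2/p(v)=\tfrac{1}{2}\sum_v p(v)\,(\delta(v)/p(v))^2$ is a $p$-weighted squared relative error. It tolerates larger relative errors on low-probability tokens and does not single out a constant one; equal per-token contributions, for instance, would give $|\delta(v)|\propto\sqrt{p(v)}$. Likewise, reading the gradient $-p(v)/q(v)$ as ``equalizing relative errors'' is an artifact of differentiating in $q$-coordinates rather than in the logits the head actually optimizes.

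Consequently everything rests on your floor term $\eta\,|\mathrm{supp}_{\mathrm{eff}}(p)|\approx\eta\,e^{\mathcal{H}(p)}$. That term is in fact the paper's entire mechanism, but in your proposal it is an add-on whose stated reason undercuts your first term:
\begin{itemize}
\item The floor sits on effective-support tokens, which have $p(v)\approx e^{-\mathcal{H}(p)}$, not tiny mass, so ``CE/KL under-weights tiny $p(v)$'' does not apply to them.
\item A floor there means a relative error of about $\eta/p(v)$, which contradicts your constant-$\epsilon$ premise on exactly those tokens.
\item The $\chi^2$ form you invoked charges an absolute error $\eta$ on a token of mass $p(v)$ a cost $\eta^2/(2p(v))$, which grows as $p(v)$ shrinks. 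By your own reasoning, KL would suppress such a floor, not leave it.
\end{itemize}

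The missing idea is the paper's logit-space argument. The gradient $\partial D_{\mathrm{KL}}(p\|q)/\partial z_j=q_j-p_j$ depends only on the absolute difference, not on the size of $p_j$. Under finite capacity this leaves a roughly uniform absolute residual $|q(v)-p(v)|\lesssim\sigma$ across the effective support; the paper itself flags this as a heuristic modeling assumption. Summing over $|\mathcal{S}_{\mathrm{eff}}(p)|\approx e^{\mathcal{H}(p)}$ tokens gives $d_{\mathrm{TV}}\approx\tfrac{\sigma}{2}e^{\mathcal{H}(p)}$, with the complement contributing only an entropy-independent constant. Linearizing about $\bar{\mathcal{H}}$ then gives $b^{\mathrm{RS}}=\tfrac{\sigma}{2}e^{\bar{\mathcal{H}}}$ and $a^{\mathrm{RS}}=1-\tfrac{\sigma}{2}e^{\bar{\mathcal{H}}}(1-\bar{\mathcal{H}})$.

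Make that the main term, with this justification, and drop the proportional-error story, which belongs to TV training. The comparison of $b^{\mathrm{RS}}$ with $b^{\mathrm{TO}}$ then remains, as in the paper, an empirical observation rather than something you derive.
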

\begin{proof}[Proof sketch]
The CE/KL gradient $q_j - p_j$ produces uniform per-token mismatch $|\eta_v| \lesssim \sigma$.
Since the effective support size scales as $|\mathcal{S}_{\mathrm{eff}}| \approx \exp(\mathcal{H}(p))$, the TV distance accumulates as $d_{\mathrm{TV}} \approx \frac{\sigma}{2}\exp(\mathcal{H}(p))$, yielding $\alpha^{\mathrm{RS}} \approx 1 - \frac{\sigma}{2}\exp(\mathcal{H}(p))$.
Linearizing the exponential over the operating entropy range gives the stated form.
See \S\ref{subsec:ce_rs} for details.
\end{proof}

Therefore, under CE/KL-trained MTP, both rejection and target-only sampling remain sensitive to entropy shifts. As policy entropy fluctuates significantly during RL training, this sensitivity inherently limits the achievable speedup.
\section{Optimizing MTP for RL Training}
\label{sec:mtp_training}

    As discussed above, MTP acceptance rates can degrade significantly during RL training due to the entropy bound. In this section, we develop the novel end-to-end TV loss to address this challenge.

\subsection{TV Loss: Directly Optimizing Acceptance Rate}
\label{subsec:tv_loss}

\paragraph{Motivation.}
Conventional MTP training minimizes the cross-entropy (CE) loss or the KL divergence between the target and draft distributions.\footnote{Throughout this paper, ``KL'' refers to the forward KL divergence $D_{\mathrm{KL}}(p\|q)$ unless otherwise noted. CE and forward KL differ only by the constant $\mathcal{H}(p)$ and yield identical gradients. We analyze the reverse KL divergence $D_{\mathrm{KL}}(q\|p)$ separately in \S\ref{app:reverse_kl}.}
However, the rejection sampling acceptance rate is determined by the TV distance (Eq.~\eqref{equ:rs_tv}), not the KL divergence.
By Pinsker's inequality, $d_{\mathrm{TV}}(p,q) \leq \sqrt{D_{\mathrm{KL}}(p\|q)/2}$, so KL provides only an indirect upper bound, and minimizing it does not efficiently minimize TV distance.
This motivates directly optimizing the TV distance as the MTP training objective.

\paragraph{TV Loss.}
We propose to directly minimize the TV distance:
\begin{align}
\label{equ:tv_loss}
\mathcal{L}_{\mathrm{TV}} = d_{\mathrm{TV}}(p, q) = 1 - \sum_{v \in \mathcal{V}} \min\bigl(p(v), q(v)\bigr),
\end{align}
where $p$ is treated as a constant (detached from the computation graph) and gradients flow only through $q$.

\paragraph{Gradient Analysis.}
Let the draft head output logits $z \in \mathbb{R}^{|\mathcal{V}|}$ with $q_j = \mathrm{softmax}(z)_j$.
The gradient of the TV loss with respect to $z_j$ is:
\begin{align}
\label{equ:tv_grad}
\frac{\partial \mathcal{L}_{\mathrm{TV}}}{\partial z_j} = -q_j \Bigl[ \mathbbm{1}[q_j \leq p_j] - S \Bigr], \quad \text{where} \quad S = \sum_{v} \mathbbm{1}[q_v \leq p_v] \cdot q_v.
\end{align}

\begin{proposition}[Bounded Gradient]
\label{prop:bounded_grad}
The TV loss gradient is bounded: $\left| \frac{\partial \mathcal{L}_{\mathrm{TV}}}{\partial z_j} \right| \leq 1$ for all $j$.
\end{proposition}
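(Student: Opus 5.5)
The plan is to start from the explicit gradient formula in Eq.~\eqref{equ:tv_grad} and bound the bracketed factor directly; the softmax structure makes this short. Write $I_j = \mathbbm{1}[q_j \le p_j] \in \{0,1\}$ and $S = \sum_v I_v q_v$. Because $q$ is a probability vector ($q_v \ge 0$, $\sum_v q_v = 1$) and each $I_v \in \{0,1\}$, we have $0 \le S \le \sum_v q_v = 1$. As a preliminary sanity check, I will re-derive Eq.~\eqref{equ:tv_grad}. Away from ties $q_v = p_v$, we have $\partial \mathcal{L}_{\mathrm{TV}}/\partial q_v = -I_v$. The softmax Jacobian is $\partial q_v/\partial z_j = q_v(\delta_{vj} - q_j)$. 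Combining these gives $-\sum_v I_v q_v(\delta_{vj}-q_j) = -q_j(I_j - S)$. On the measure-zero tie set, the formula is read as a particular subgradient choice, and the bound below holds for it verbatim.

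Next, I bound the factor $I_j - S$. There are two cases. If $I_j = 1$, then $I_j - S = 1 - S \in [0,1]$. If $I_j = 0$, then $I_j - S = -S \in [-1,0]$. In either case $|I_j - S| \le 1$. Since $0 \le q_j \le 1$, it follows that $\bigl|\partial \mathcal{L}_{\mathrm{TV}}/\partial z_j\bigr| = q_j\,|I_j - S| \le q_j \le 1$, which is the claim. In fact this gives the sharper bound $q_j$.

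I could also remark that the sharper estimate $q_j \max(S, 1-S)$ holds, and that when $I_j = 1$ we have $S \ge q_j$, so $|\cdot| \le q_j(1-q_j) \le 1/4$. This refinement is optional.

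There is no real obstacle. The only point needing care is the non-differentiability of $\min(p_v, q_v)$ at $q_v = p_v$. I will handle it by noting that Eq.~\eqref{equ:tv_grad} uses the convention $\mathbbm{1}[q_j \le p_j]$, which selects a valid element of the Clarke subdifferential. Since any choice of $I_v \in [0,1]$ still gives $S \in [0,1]$ and $|I_j - S| \le 1$, the bound holds for every subgradient.
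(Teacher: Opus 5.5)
Your proposal is correct and follows essentially the same route as the paper: start from the gradient formula, note that $S\in[0,1]$ and the indicator is in $\{0,1\}$ so $|\mathbbm{1}[q_j\le p_j]-S|\le 1$, and conclude $|\partial\mathcal{L}_{\mathrm{TV}}/\partial z_j|\le q_j\le 1$, exactly as in the appendix. Your subgradient remark about ties and the optional $q_j(1-q_j)\le 1/4$ refinement are valid extras the paper does not include.
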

\begin{proof}
Since $q_j \in [0, 1]$ and $|\mathbbm{1}[q_j \leq p_j] - S| \leq 1$ (as both the indicator and $S \in [0, 1]$), we have $\left| \frac{\partial \mathcal{L}_{\mathrm{TV}}}{\partial z_j} \right| = q_j \cdot |\mathbbm{1}[q_j \leq p_j] - S| \leq 1$.
\end{proof}

This bounded gradient property ensures training stability, in contrast to KL divergence whose gradient $\frac{\partial D_{\mathrm{KL}}}{\partial z_j} = q_j - p_j$ can exhibit large magnitudes when $q$ and $p$ disagree significantly.

\paragraph{Intuitive Interpretation.}
The TV loss gradient has a natural interpretation in terms of the rejection sampling mechanism:
\begin{itemize}[leftmargin=8mm, itemsep=1mm]
    \item For tokens where $q_j \leq p_j$ (tokens that would be accepted): the gradient increases the logit, encouraging the draft to assign more mass.
    \item For tokens where $q_j > p_j$ (tokens that would be rejected): the gradient decreases the logit, suppressing overconfident predictions.
    \item For tokens where $q_j \approx 0$ (irrelevant tokens): the gradient is automatically $\approx 0$ (since it is proportional to $q_j$), avoiding wasted optimization effort on the long tail of the vocabulary.
\end{itemize}
This selective gradient behavior contrasts with KL divergence, which applies gradients to all tokens regardless of their relevance to the acceptance decision.

\paragraph{Comparison of CE, KL, and TV Gradients.}
Table~\ref{tab:grad_comparison} summarizes the gradient structures of the three training objectives.
The key distinction lies in whether the gradient is proportional to $q_j$: CE loss produces uniform per-token mismatch ($q_j - p_j$) that distributes optimization effort uniformly across the vocabulary, including irrelevant low-probability tokens.
In contrast, both reverse KL and TV loss exhibit $q_j$-proportional gradients with natural tail suppression, concentrating updates on tokens the draft already assigns non-negligible mass.
However, despite this shared property, reverse KL yields negligible acceptance rate improvements over CE (\S\ref{sec:exp}), because its zero-forcing behavior allows the draft to drop modes of $p$ and its asymmetric penalty drives $q \leq p$ globally---both reducing the TV overlap $\sum_v \min(p, q)$ (see \S\ref{app:reverse_kl} for a detailed analysis).
TV loss avoids these pitfalls by directly optimizing the acceptance-relevant quantity and producing a probability-proportional mismatch that decouples acceptance from target entropy.

\begin{table}[t]
    \centering
    \caption{Gradient comparison across training objectives. $C$ denotes a global constant ($S$ for TV, $D_{\mathrm{KL}}(q\|p)$ for reverse KL). See \S\ref{app:tv_grad}-\S\ref{app:reverse_kl} for derivations.}
    \label{tab:grad_comparison}
    \begin{tabular}{lccc}
        \toprule
        \textbf{Property} & \textbf{Forward KL} & \textbf{Reverse KL} & \textbf{TV Loss} \\
        \midrule
        Gradient & $q_j - p_j$ & $q_j[\log(q_j/p_j) - C]$ & $-q_j[\mathbbm{1}[q_j \leq p_j] - C]$ \\
        $\propto q_j$? & No & Yes & Yes \\
        Tail suppression & No & Yes & Yes \\
        \bottomrule
    \end{tabular}
\end{table}

\subsection{End-to-End Multi-Step TV Loss}
\label{subsec:e2e_tv}

For $\gamma$-step MTP, the expected acceptance length is:
\begin{align}
\mathbb{E}[L] = \sum_{j=1}^{\gamma} \prod_{i=1}^{j} \alpha_i = \alpha_1 + \alpha_1 \alpha_2 + \alpha_1 \alpha_2 \alpha_3 + \cdots + \prod_{i=1}^{\gamma} \alpha_i,
\end{align}
where $\alpha_i = 1 - d_{\mathrm{TV}}(p_i, q_i)$ is the per-step acceptance rate at step $i$.
Directly optimizing the \textit{average} per-step TV distance $\frac{1}{\gamma}\sum_{i=1}^{\gamma} d_{\mathrm{TV}}(p_i, q_i)$ does not account for the multiplicative structure of multi-step acceptance.
We therefore propose the \textit{end-to-end (e2e) TV loss}:
\begin{align}
\label{equ:e2e_tv}
\mathcal{L}_{\mathrm{e2e}} = 1 - \frac{1}{\gamma} \sum_{j=1}^{\gamma} \prod_{i=1}^{j} \alpha_i = 1 - \frac{1}{\gamma} \sum_{j=1}^{\gamma} \prod_{i=1}^{j} \bigl(1 - d_{\mathrm{TV}}(p_i, q_i)\bigr).
\end{align}
This loss directly optimizes the normalized expected acceptance length, naturally weighting earlier steps more heavily (since they appear in more product terms) and capturing the compounding effect of multi-step verification.
This can be regarded as a dynamic step-wise weighting scheme: since $\alpha_i$ depends on the current draft quality, the effective weight of each position adapts automatically as training progresses, shifting emphasis toward steps that currently limit acceptance.
This contrasts with prior work that uses fixed position-dependent weights, such as head-dependent loss weights \citep{medusa-cai-2024,li2025eagle3}, exponentially decaying block-position weights \citep{chen2026dflash}, fixed decay on rejected positions \citep{lei2026draftopd}, or per-position weights on a CE base \citep{wu2026dpace}.
\subsection{Impact of Training Objective on Entropy-Acceptance Relationship}
\label{subsec:training_impact}

Having introduced the TV loss, we now analyze why it fundamentally outperforms CE/KL training in the context of RL, where the target entropy shifts continuously.
The linear relationships in Eq.~\eqref{equ:to_linear} and~\eqref{equ:rs_linear} characterize draft models trained with CE/KL loss; we show that the choice of training objective fundamentally alters the entropy-acceptance relationship.
The full derivation is provided in \S\ref{app:entropy_accept}; here we state the main results.

\paragraph{Pinsker's inequality and the KL--TV gap.}
By Pinsker's inequality:
\begin{align}
\label{equ:pinsker}
d_{\mathrm{TV}}(p, q) \leq \sqrt{\frac{1}{2} D_{\mathrm{KL}}(p \| q)},
\end{align}
$\sqrt{D_{\mathrm{KL}}/2}$ provides only an upper bound on $d_{\mathrm{TV}}$, and KL optimization allocates model capacity inefficiently for minimizing TV distance:
\textit{Minimizing the KL divergence does not efficiently minimize the TV distance}, which is the quantity that directly determines the rejection sampling acceptance rate.

\paragraph{CE/KL Training: Uniform Mismatch.}
The KL divergence gradient $\frac{\partial D_{\mathrm{KL}}}{\partial z_j} = q_j - p_j$ applies optimization pressure proportional to the absolute difference $|q_j - p_j|$, regardless of the magnitude of $p_j$ relative to other tokens.
Under a capacity-limited draft model, this produces approximately uniform per-token mismatch: $|q^*(v) - p(v)| \lesssim \sigma$ for a constant $\sigma$.
As shown in Proposition~\ref{prop:rs_linear}, this uniform mismatch accumulates over the effective support $|\mathcal{S}_{\mathrm{eff}}| \approx \exp(\mathcal{H}(p))$, yielding an entropy-dependent acceptance rate.

\paragraph{TV Training: Probability-Proportional Mismatch.}
The TV loss gradient (Eq.~\eqref{equ:tv_grad}) is proportional to $q_j$, concentrating optimization on high-probability tokens and automatically ignoring the long tail.
Under a capacity-limited draft model, each token receives optimization resources proportional to its probability $q_j \approx p_j$, so the per-token mismatch also scales with $p(v)$ rather than remaining at a uniform level.
This produces probability-proportional mismatch: $|q^*(v) - p(v)| \lesssim \delta \cdot p(v)$ for a constant $\delta$ (see \S\ref{subsec:tv_rs} for a detailed derivation).

\begin{proposition}[Reduced Entropy Dependence under TV Training]
\label{prop:tvd_entropy}
When the per-token mismatch satisfies $|q^*(v) - p(v)| \lesssim \delta \cdot p(v)$, the TV distance is bounded independently of entropy:
\begin{align}
\label{equ:tvd_entropy}
d_{\mathrm{TV}}(p, q^*_{\mathrm{TV}}) \leq \frac{\delta}{2} \sum_v p(v) = \frac{\delta}{2},
\end{align}
yielding $\alpha^{\mathrm{RS}}_{\mathrm{TV}} \geq 1 - \delta/2$.
In practice, the draft head has finite capacity, so $\delta$ may exhibit weak entropy dependence $\delta = \delta(\mathcal{H})$, but empirically the entropy--acceptance slope is reduced by over $95\%$ compared to CE/KL training (Fig.~\ref{fig:entropy_vs_accept}).
\end{proposition}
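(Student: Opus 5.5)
The argument is a direct computation from the definition of the TV distance in Eq.~\eqref{equ:reject_sample}, combined with the hypothesis $|q^*(v) - p(v)| \le \delta\, p(v)$. We write $q^* = q^*_{\mathrm{TV}}$ and start from
\begin{align*}
d_{\mathrm{TV}}(p, q^*) = \frac{1}{2}\sum_{v \in \mathcal{V}} \bigl|p(v) - q^*(v)\bigr|.
\end{align*}
We then bound each summand termwise by $\delta\, p(v)$. Because the hypothesis holds pointwise for every $v$, summing gives $d_{\mathrm{TV}}(p,q^*) \le \frac{\delta}{2}\sum_v p(v)$. Since $p \in \Delta^{|\mathcal{V}|}$ is normalized, this equals $\delta/2$.

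The acceptance bound then follows from Eq.~\eqref{equ:rs_tv}:
\begin{align*}
\alpha^{\mathrm{RS}}_{\mathrm{TV}} = 1 - d_{\mathrm{TV}}(p,q^*) \ge 1 - \delta/2.
\end{align*}
Nothing in this chain refers to $\mathcal{H}(p)$. This is the key contrast with Proposition~\ref{prop:rs_linear}. There, a uniform additive mismatch $\sigma$ summed over an effective support of size $\exp(\mathcal{H}(p))$ produces an entropy-growing TV distance. Here, the relative mismatch is weighted by $p(v)$, and those weights sum to one no matter how spread out $p$ is.

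Two remarks would tighten the statement.

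\textbf{Sharper constant.} Because both $p$ and $q^*$ are normalized, $\sum_v (q^*(v)-p(v)) = 0$. So the TV distance equals the total positive excess $\sum_v (q^*(v)-p(v))_+$. This is at most $\delta \sum_{v: q^* > p} p(v)$, which is no worse than the stated $\delta/2$. We would note this only as a remark.

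\textbf{Tokens with $p(v)=0$.} The hypothesis forces $q^*(v)=0$ wherever $p(v)=0$, so no mass can leak outside the support of $p$. We would state this explicitly, since it is what rules out a tail contribution.

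The genuinely hard part is not this inequality but justifying the hypothesis itself. That is the claim that TV training yields probability-proportional mismatch with a constant $\delta$, deferred to \S\ref{subsec:tv_rs}. The plan treats it as an assumption and confines the rigorous content of Proposition~\ref{prop:tvd_entropy} to the conditional implication. The final sentence of the proposition, about a weak dependence $\delta(\mathcal{H})$ and the $95\%$ slope reduction, is empirical and not proved. At most we would observe that if $\delta(\mathcal{H})$ is differentiable, then the entropy slope of the lower bound is $-\tfrac{1}{2}\delta'(\mathcal{H})$. This gives a direct way to compare against the slope $b^{\mathrm{RS}}$ of Proposition~\ref{prop:rs_linear}.
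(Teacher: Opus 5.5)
Your proof is correct and matches the paper's: bound each term of $\frac12\sum_v|q^*(v)-p(v)|$ by $\delta\,p(v)$, use $\sum_v p(v)=1$, and convert via $\alpha^{\mathrm{RS}}=1-d_{\mathrm{TV}}$, taking the mismatch hypothesis as given (the paper motivates it separately from a bounded log-ratio assumption with $\delta=e^{\epsilon}-1$). One small correction to your side remark: $\delta\sum_{v:q^*(v)>p(v)}p(v)$ on its own can exceed $\delta/2$, so also use that $d_{\mathrm{TV}}$ equals the negative excess, which gives $d_{\mathrm{TV}}\le\delta\min\{p(A),1-p(A)\}\le\delta/2$, where $A=\{v:q^*(v)>p(v)\}$ and $p(A)=\sum_{v\in A}p(v)$.
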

\begin{proof}[Proof sketch]
The TV gradient is proportional to $q_j$ (Eq.~\eqref{equ:tv_grad}), so each token's optimization resource scales with its probability, producing $|q^*(v) - p(v)| \lesssim \delta \cdot p(v)$ (\S\ref{subsec:tv_rs}).
Summing: $d_{\mathrm{TV}} = \frac{1}{2}\sum_v |q^* - p| \leq \frac{\delta}{2}\sum_v p(v) = \frac{\delta}{2}$, which is entropy-independent since $\sum_v p(v) = 1$.
\end{proof}

This analysis explains the empirical observation that TV-trained draft models achieve substantially more stable acceptance rates across varying target entropy, while CE/KL-trained models exhibit a strong negative correlation (Fig.~\ref{fig:entropy_vs_accept}).

\section{MTP Adaptation Strategy for RL}
\label{sec:mtp_adapt}

A key question for using MTP in RL pipelines is whether we need online updates of the MTP module during RL training.
We investigate this through a decomposition analysis that disentangles the two factors driving acceptance rate changes.

\subsection{Decomposition: Entropy vs.\ Mismatch in RL}
\label{subsec:decomposition}

Using the linear entropy--acceptance relationship established in \S\ref{sec:formulation}, we decompose the change in acceptance length during RL training as:
\begin{align}
\label{equ:decomposition}
\Delta\alpha_t = \underbrace{b \cdot (\mathcal{H}_t - \mathcal{H}_0)}_{\Delta\alpha_{\mathrm{entropy}}} + \underbrace{\Delta\alpha_t - b \cdot (\mathcal{H}_t - \mathcal{H}_0)}_{\Delta\alpha_{\mathrm{mismatch}}},
\end{align}
where $b$ is the entropy--acceptance slope estimated from the early phase of each experiment, $\mathcal{H}_0$ is the initial entropy, and $\Delta\alpha_t = \alpha_t - \alpha_0$ is the total acceptance change at step $t$.
The first term captures the acceptance change attributable to entropy shifts alone (assuming a fixed draft--target relationship), while the residual captures the effect of growing draft--target mismatch due to backbone weight updates.

\begin{figure*}[t]
    \centering
    \includegraphics[width=\linewidth]{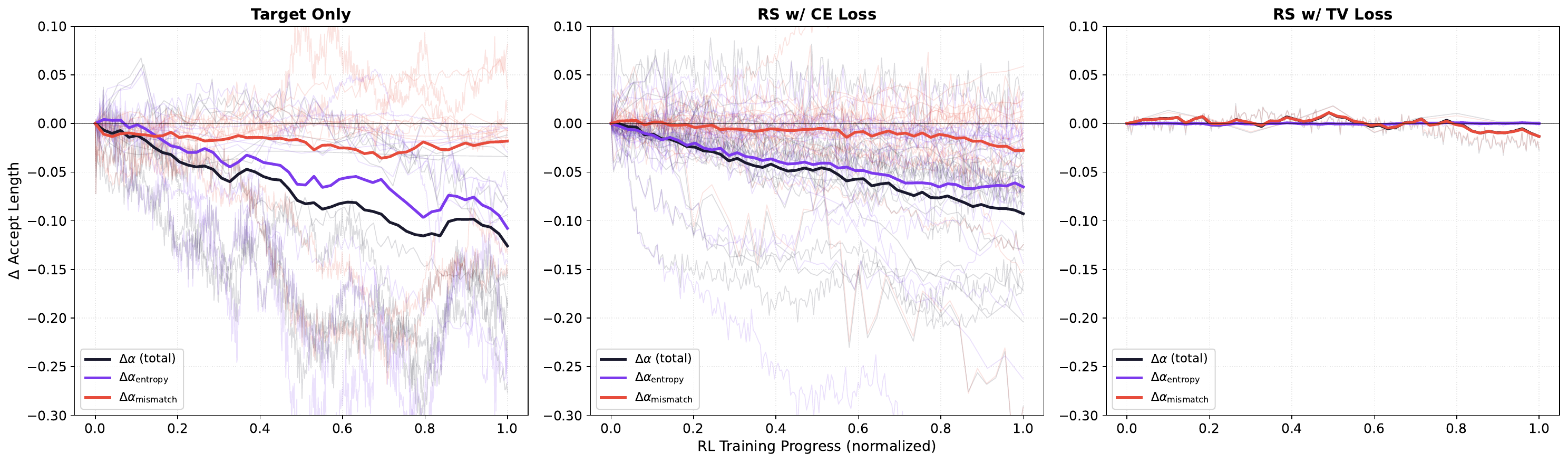}
    \caption{
        Decomposition of acceptance length changes during RL training.
        $\Delta\alpha$ (total, gray) is decomposed into an entropy-driven component $\Delta\alpha_{\mathrm{entropy}} = b \cdot (\mathcal{H}_t - \mathcal{H}_0)$ (orange) and a draft--target mismatch component $\Delta\alpha_{\mathrm{mismatch}}$ (green).
        Under target-only sampling, both entropy increase and growing mismatch contribute to acceptance degradation.
        Under rejection sampling with CE loss, the degradation is almost entirely entropy-driven, with mismatch remaining near zero.
        RS with TV loss shows near-zero change across all components, confirming the stability of TV-trained drafts.
    }
    \label{fig:decomposition_delta}
\end{figure*}

As shown in Fig.~\ref{fig:decomposition_delta}:
\textbf{(1)} Under target-only sampling, both entropy increase and growing mismatch contribute to acceptance degradation, as the greedy draft prediction becomes increasingly misaligned with the evolving target.
\textbf{(2)} Under rejection sampling with CE loss, the degradation is almost entirely entropy-driven ($\Delta\alpha_{\mathrm{mismatch}} \approx 0$), indicating that RL weight updates do not significantly affect the draft--target TV overlap.
\textbf{(3)} Under rejection sampling with TV loss, near-zero change is observed across all components, confirming that TV-trained drafts are robust to both entropy shifts and weight updates.

\subsection{Pre-RL Adaptation is Sufficient}
\label{subsec:pre_rl}

The decomposition analysis leads to a key practical insight: since the draft--target mismatch induced by RL weight updates is negligible under rejection sampling, \textit{updating the MTP heads during RL is unnecessary}.
A one-time pre-RL adaptation with TV loss---applied during the SFT stage before RL begins---is sufficient to produce draft models that maintain high acceptance rates throughout RL training (Fig.~\ref{fig:rl_accept_len}).
This eliminates the memory overhead of maintaining MTP optimizer states and the computational cost of MTP gradient updates during RL.

Empirically, as shown in Fig.~\ref{fig:rl_mtp_train_accept_len}, continuing to update MTP weights during RL yields no significant improvement when starting from a well-trained TV checkpoint.
Worse, updating with CE loss during RL causes the acceptance rate to degrade toward the RS w/ CE baseline, as CE loss makes the draft distribution smoother and erodes the gains from TV training (\S\ref{subsec:distribution_shift_in_rl_training}).

\subsection{Cross Training of MTP and Backbone}
\label{subsec:cross_training}

When MTP co-training during RL is desired (e.g., for target-only sampling where mismatch is non-negligible), we find that joint training with separate learning rates and separate gradient norm normalization provides the best trade-off.
The backbone gradients are not affected by the MTP loss (which only flows through the draft heads), ensuring that the MTP training does not interfere with the RL optimization of the backbone.

\section{Experiments}
\label{sec:exp}

We validate the effectiveness of our method \textit{Bebop} through three sets of experiments:
(1) the impact of different multi-step MTP loss objectives on acceptance rate during SFT;
(2) the benefits of e2e TV loss with rejection sampling on acceptance rate, speedup, and training stability during RL;
and (3) the gains from updating MTP parameters during the RL stage.

\subsection{Multi-Step MTP Training Improves Acceptance Rate}

We first evaluate how different loss objectives affect MTP acceptance rates during the SFT stage. Specifically, we compare four MTP training objectives:
\begin{enumerate}[label=(\arabic*), leftmargin=8mm, itemsep=1mm]
    \item \textbf{CE loss}: standard cross-entropy between draft and target distributions;
    \item \textbf{KL loss}: KL divergence $D_{\mathrm{KL}}(p \| q)$;
    \item \textbf{Reverse KL loss}: Reverse KL divergence $D_{\mathrm{KL}}(q \| p)$ (Eq.~\eqref{eq:rkl_gradient});
    \item \textbf{TV loss}: per-step TV distance (Eq.~\eqref{equ:tv_loss});
    \item \textbf{e2e TV loss}: end-to-end multi-step TV loss (Eq.~\eqref{equ:e2e_tv}).
\end{enumerate}

We conduct the primary experiments on Qwen3.5-35A3B~\citep{qwen3.5} using mixed RFT data. All experiments use a constant learning rate of $3.5 \times 10^{-5}$ with 3\% warmup steps, training for 1 epoch with Megatron~\citep{shoeybi2020megatronlm} at a global batch size of 256 and a sequence length of 256K.
During multi-step MTP training, we perform forward and backward passes over 5 MTP steps while freezing the LLM backbone.
All evaluations use $\gamma = 3$ (i.e., the target model verifies 4 tokens at a time).
We further extend our experiments to Qwen3.6-35A3B, Qwen3.6-Plus, and Qwen3.7-Plus, training on different data mixtures including domain-specific data (code, agent, reasoning) and mixed RFT data. The throughput is measured using SGLang's MTP implementation with rejection sampling (see \S\ref{app:rs_inference} for implementation details).

\paragraph{Rejection Sampling Acceptance.}
Table~\ref{tab:accept_rate} reports the acceptance rate improvements of our proposed e2e TV loss compared to the CE and KL baselines on Qwen3.5-35A3B.
Across all tasks, e2e TV loss consistently improves rejection sampling acceptance rates by 3--8\% on in-distribution tasks (Math, Code, Agent, SWE) and up to 2.3\% on the out-of-distribution MT-Bench~\citep{zheng2023mtbench} task.
Notably, on Agent tasks where the CE baseline already achieves a high acceptance rate of 90.3\%, e2e TV loss further pushes it to 97.0\%, a level that substantially improves rollout efficiency in both RL training and agentic inference.

Beyond the primary experiments, we evaluate across a broader set of models and data configurations.
As shown in Fig.~\ref{fig:accept_rate_combined}, we train Qwen3.6-35A3B, Qwen3.6-Plus, and Qwen3.7-Plus on different data mixtures and track per-step acceptance rates throughout training.
Several patterns emerge.
First, CE loss causes a pronounced and persistent decline in Step~1 acceptance rate during training, as it distributes optimization effort across the entire vocabulary. In contrast, TV loss maintains stable or slightly improving Step~1 acceptance.
Second, the advantage of e2e TV loss becomes increasingly prominent at later MTP steps: at Step~3, TV loss outperforms CE loss by approximately 5\%, while at Step~2 the margin is 2.5--5\%.
Third, the gains are task-dependent: agentic tasks benefit the most, with improvements up to 8\% on Agent and SWE-Bench~\citep{jimenez2024swebench}, while reasoning and conversational tasks see gains of 4--5\%.
Finally, MTP acceptance rates exhibit strong generalization. Models trained entirely without agent-specific data still achieve approximately 70\% acceptance on agent tasks. Specifically, TV loss provides larger improvements on in-distribution domains than on out-of-distribution tasks.

\begin{table}[t]
    \centering
    \caption{MTP acceptance rate (\%) under rejection sampling across tasks and training objectives under $\gamma=3$ on Qwen3.5-35A3B. All results are measured at convergence. $\Delta$ denotes improvement over CE loss baseline.}
    \label{tab:accept_rate}
    \begin{tabular}{lccccc}
        \toprule
        \textbf{MTP Loss} & \textbf{Math} & \textbf{Code} & \textbf{SWE} & \textbf{Agent} & \textbf{MTBench (OOD)} \\
        \midrule
        CE loss (baseline) & $75.0$ & $71.3$ & $75.1$ & $90.3$ & $65.3$ \\
        KL loss & $+0.0$ & $+0.0$ & $+0.2$ & $+0.2$ & $+0.0$ \\
        Reverse KL loss & $+1.3$ & $+1.0$ & $-0.2$ & $+1.0$ & $+0.5$ \\
        TV loss & $+2.4$ & $+2.5$ & $+3.3$ & $+5.2$ & $+1.4$ \\
        e2e TV loss (ours) & $\mathbf{+3.0}$ & $\mathbf{+3.3}$ & $\mathbf{+8.0}$ & $\mathbf{+6.7}$ & $\mathbf{+2.3}$ \\
        \bottomrule
    \end{tabular}
\end{table}

\begin{figure*}[t]
    \centering
    \subfloat[Accept length on reasoning and conversation tasks (Math, Code, MT-Bench).\label{fig:ce_vs_tv_accept_rate}]{%
        \includegraphics[width=\linewidth]{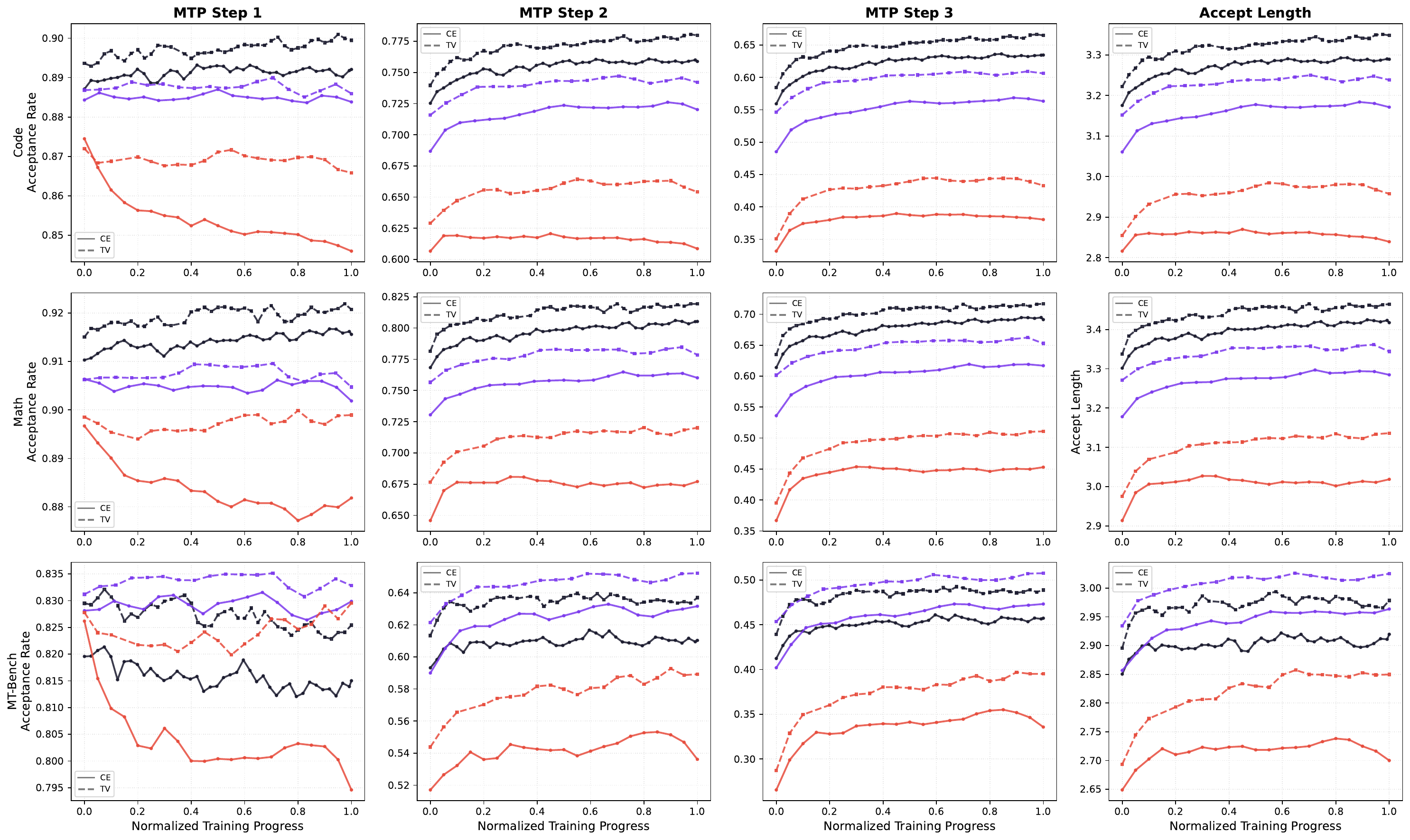}}\\
    \subfloat[Accept length on agentic and hybrid tasks (Hybrid, Agent, Long-Horizon, SWE-Bench).\label{fig:swe_vending_accept_length}]{%
        \includegraphics[width=\linewidth]{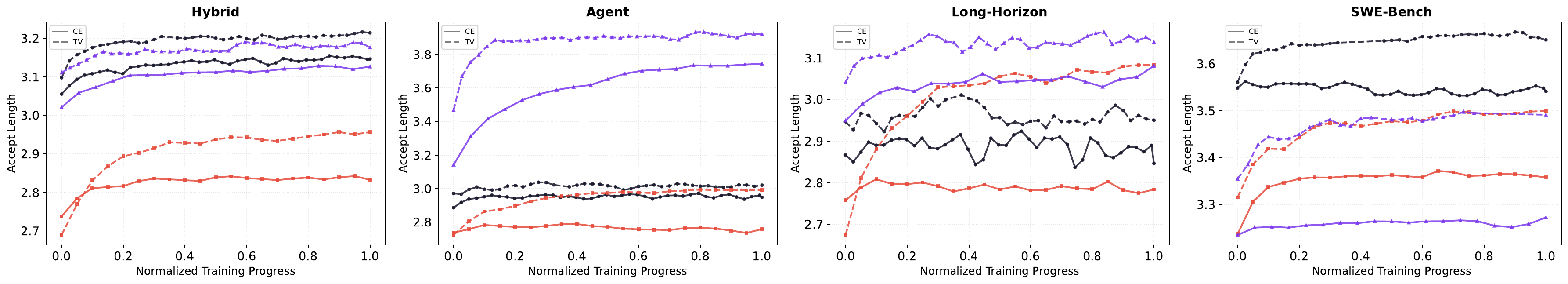}}
    \caption{CE loss (solid) vs.\ TV loss (dashed) during SFT training. TV loss consistently achieves higher acceptance rates across all MTP steps, with especially pronounced gains on agentic tasks.}
    \label{fig:accept_rate_combined}
\end{figure*}

\paragraph{Target-Only Acceptance.}
Under target-only sampling, acceptance rates are nearly identical across all training objectives ($<$0.3\% difference), as shown in Fig.~\ref{fig:to_ce_tv_accept}.
This is expected: target-only acceptance $\alpha^{\mathrm{TO}} = p(\arg\max_y q(y))$ reduces to $\max_y p(y)$ when the draft's top-1 ranking is correct, depending only on the target distribution rather than the draft's distributional shape.
In contrast, rejection sampling acceptance $\alpha^{\mathrm{RS}} = \sum_v \min(p(v), q(v))$ depends on the full distributional overlap, which is where TV loss provides its advantage.
This is consistent with our analysis in \S\ref{sec:formulation}.

\begin{figure}[t]
    \centering
    \includegraphics[width=\linewidth]{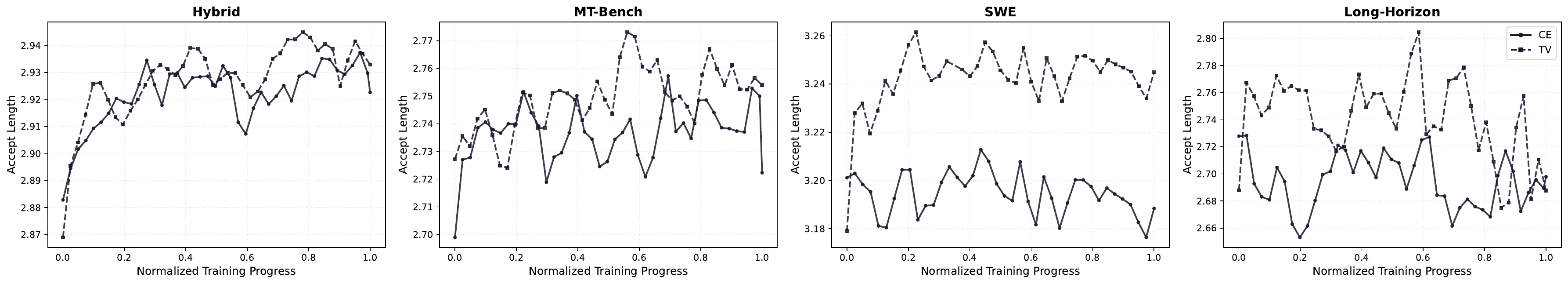}
    \caption{Accept length under target-only sampling with CE loss vs.\ TV loss during SFT training. Acceptance rates are nearly identical ($<$0.3\% difference) across all tasks, confirming that target-only acceptance depends on the target distribution rather than the draft's distributional shape.}
    \label{fig:to_ce_tv_accept}
\end{figure}

\paragraph{Throughput.}
As shown in Fig.~\ref{sfig:accept_delta_vs_throughput}, the acceptance rate improvement translates to throughput gains roughly linearly. The e2e-TV-trained Qwen3.7 Plus consistently outperforms the CE-loss-trained Qwen3.6 Plus on all datasets. These gains effectively accelerate RL rollouts, which is significant at the scale of hundreds of thousands of GPU hours.

\paragraph{Acceptance Rate Scales with Model Size.}

As shown in Table~\ref{tab:accept_rate_scale}, MTP acceptance rates after multi-step SFT training consistently increase with model size. Qwen3.7 models are trained with e2e TV loss, while Qwen3.6 models use CE loss. The acceptance rate reaches up to 95\%, especially on agent tasks, indicating that the draft model under $\gamma = 3$ has nearly converged to the backbone model.
Conversely, as model size decreases, acceptance rates degrade to varying degrees.

\subsection{TV Loss Stabilizes MTP Acceleration in RL Training}

We conduct extensive experiments in RL settings to demonstrate the effectiveness of \textit{Bebop}. We select two representative workloads spanning different generation regimes:
\begin{enumerate}[label=(\arabic*), leftmargin=8mm, itemsep=1mm]
    \item \textbf{Reasoning RL}: long chain-of-thought tasks including math reasoning, code reasoning, and instruction-following, with a maximum generation length of 64K tokens. Evaluation benchmarks: HMMT25~\citep{dekoninck2026matharena}, AIME25~\citep{aime25}, and LiveCodeBench~\citep{jain2024livecodebench}.
    \item \textbf{SWE RL}: multi-turn code editing tasks where each turn involves thinking, tool calling, and tool execution, with tool responses appended to the previous context. Maximum generation length is 128K tokens with up to 200 turns. Evaluation benchmark: SWE-Verified~\citep{jimenez2024swebench}.
\end{enumerate}

For all RL experiments, we use SGLang~\citep{sglang} as the rollout engine within an asynchronous RL framework built on top of veRL~\citep{verl}, with a learning rate of $1 \times 10^{-6}$ or $2 \times 10^{-6}$.

\begin{figure*}[t]
    \centering
    \subfloat[Reasoning RL.\label{fig:rl_hybrid_accept_len}]{%
        \includegraphics[width=0.33\linewidth]{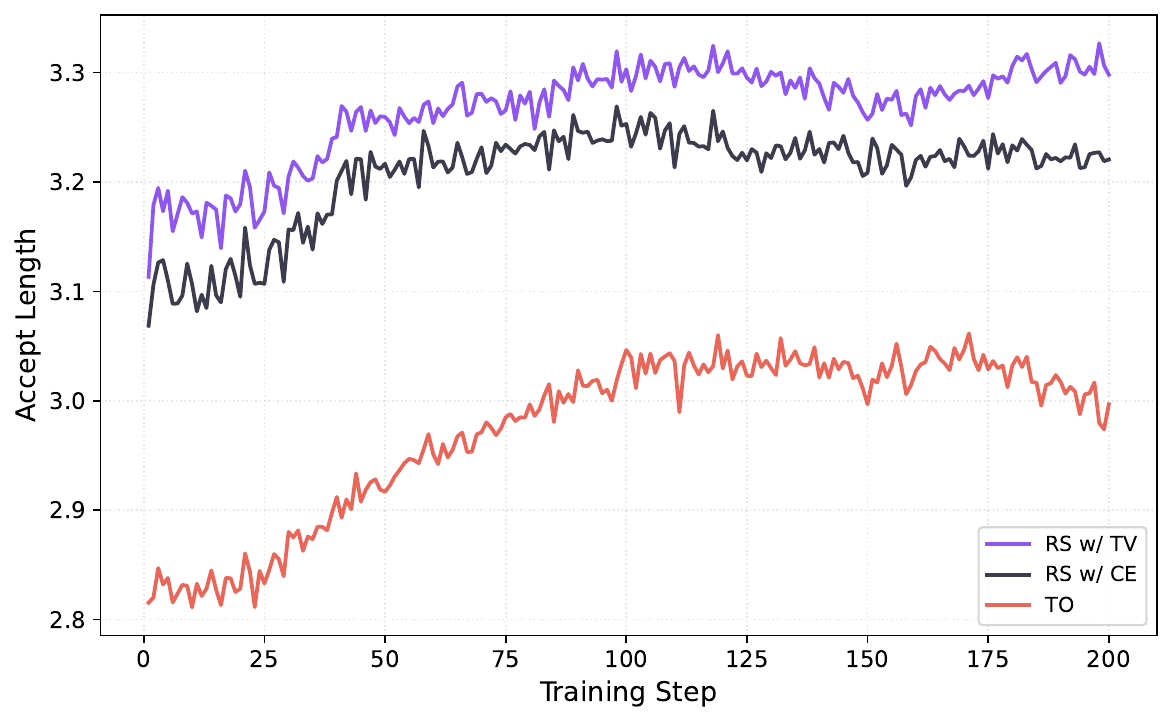}}
    \subfloat[SWE RL.\label{fig:rl_swe_accept_len}]{%
        \includegraphics[width=0.33\linewidth]{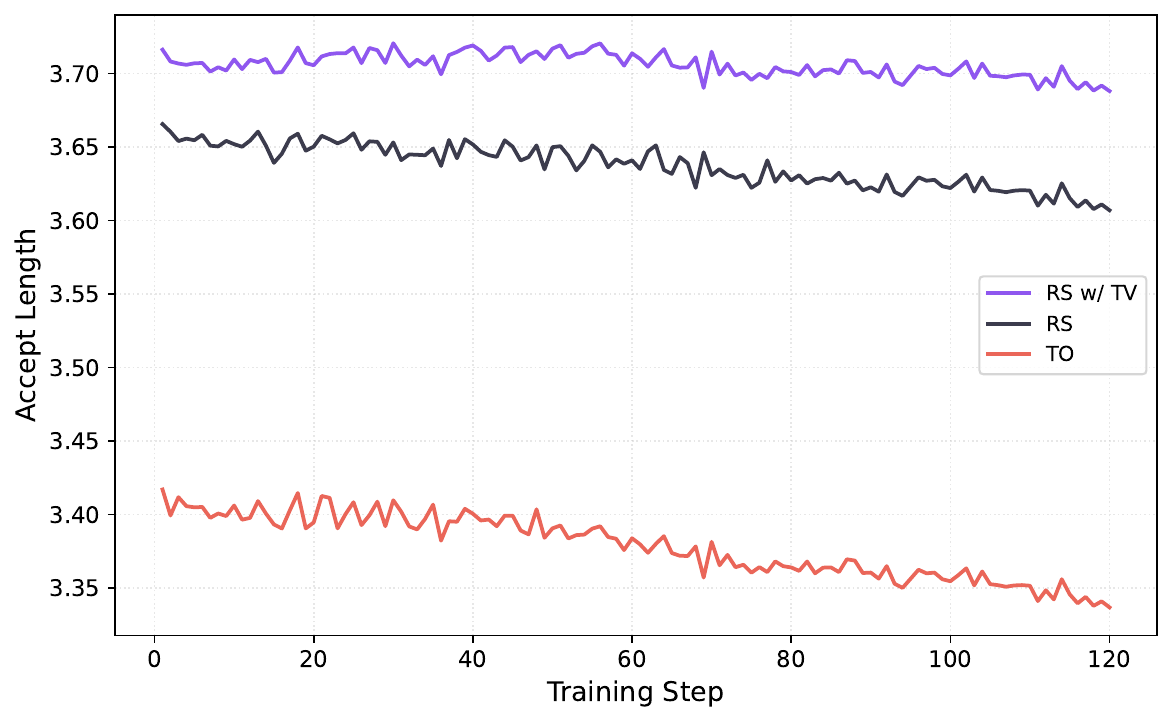}}
    \subfloat[SWE RL in Qwen-3.7 Max.\label{fig:rl_swe_max_accept_len}]{%
        \includegraphics[width=0.33\linewidth]{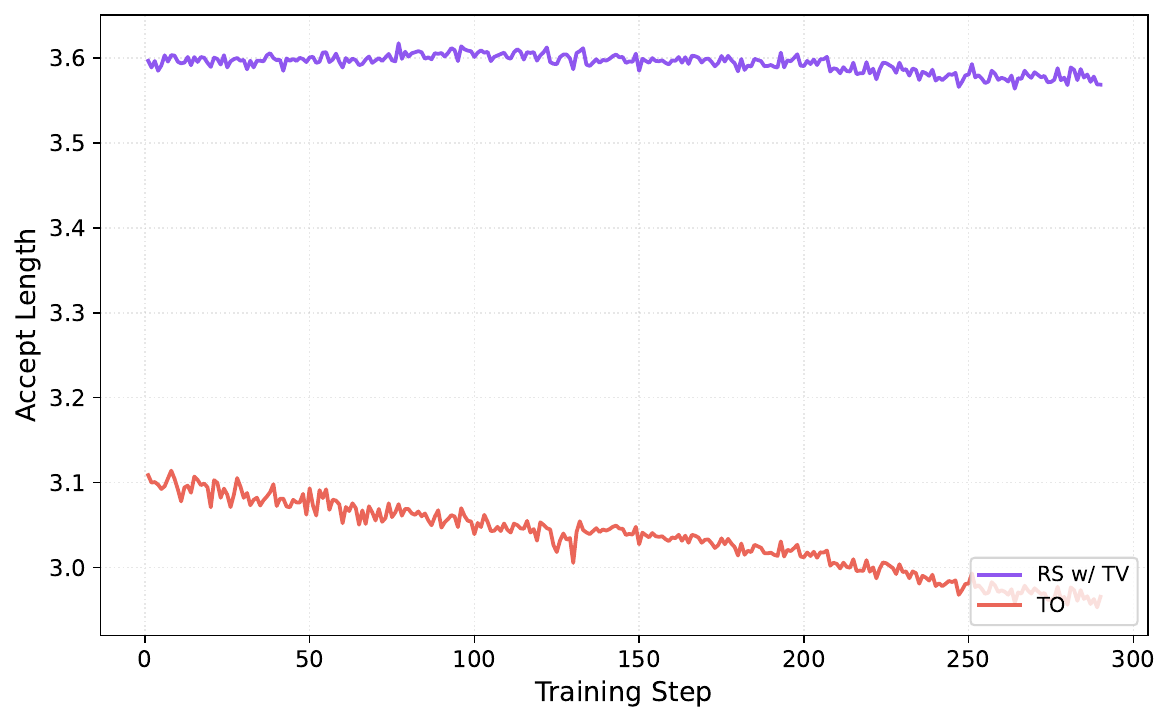}}
    \caption{Accept length during RL training across different workloads in Qwen3.6-Plus and Qwen3.7-Max. Rejection sampling with TV loss (RS w/ TV) consistently maintains higher accept lengths compared to target-only (TO) and rejection sampling with CE loss (RS w/ CE).}
    \label{fig:rl_accept_len}
\end{figure*}

\begin{table}[t]
    \centering
    \caption{MTP acceptance rate (\%) under rejection sampling across tasks and training objectives under $\gamma=3$ on different models. Qwen3.7 models are trained with e2e TV loss; all others are trained with CE loss.}
    \label{tab:accept_rate_scale}
    \begin{tabular}{lccccccc}
        \toprule
        \textbf{Model} & \textbf{Math} & \textbf{Code} & \textbf{Hybrid} & \textbf{SWE} & \textbf{Agent}  & \textbf{Long-horizon} & \textbf{MTBench} \\
        \midrule
        Qwen3.7-Max & $87.6$ & $87.7$ & $78.1$ & $81.9$ & $94.6$ & $77.2$ & $73.2$ \\
        Qwen3.7-Plus & $87.4$ & $85.7$ & $75.3$ & $79.2$ & $98.6$ & $78.0$ & $74.3$ \\
        Qwen3.6-Plus & $82.2$ & $78.7$ & $72.2$ & $75.2$ & $99.1$ & $75.6$ & $71.0$ \\
        Qwen3.6-27B & $79.9$ & $76.7$ & $71.9$ & $72.3$ & $96.3$ & $69.5$ & $67.5$ \\
        Qwen3.6-35A3B & $78.3$ & $74.4$ & $69.2$ & $71.3$ & $97.1$ & $71.3$ & $65.2$ \\
        \bottomrule
    \end{tabular}
\end{table}

\begin{figure*}[t]
    \centering
    \subfloat[Reasoning RL.\label{fig:rl_latency}]{%
        \includegraphics[width=0.36\linewidth]{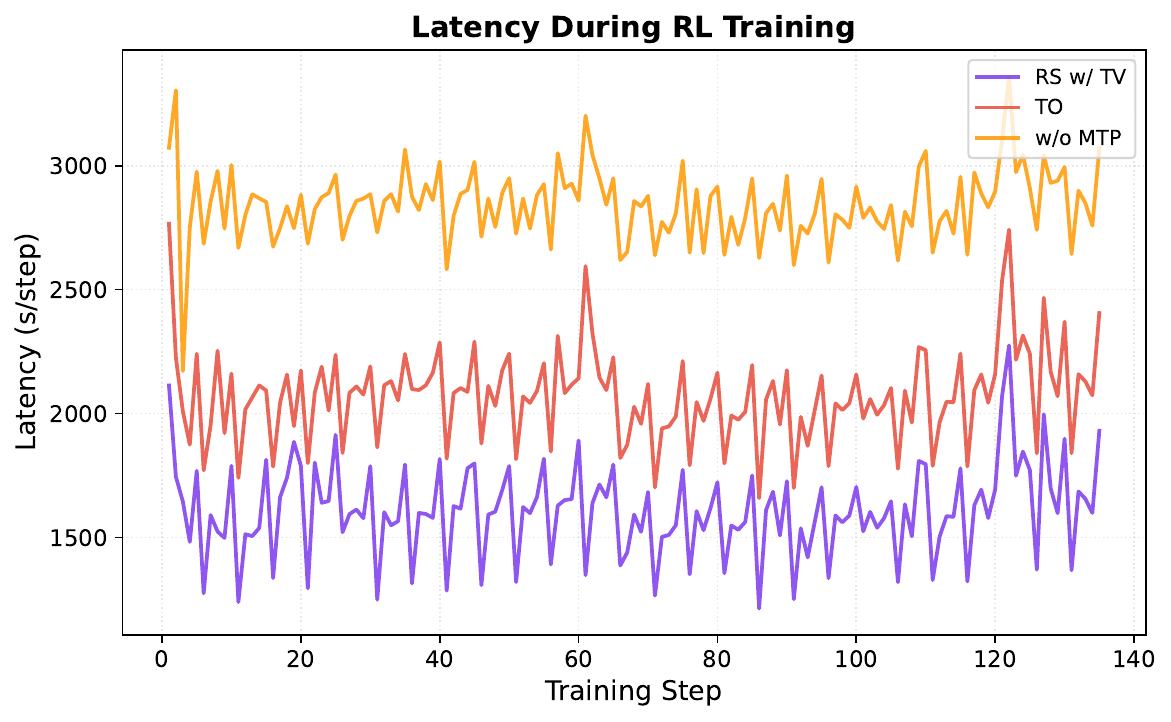}}
    \subfloat[SWE RL.\label{fig:rl_swe_latency}]{%
        \includegraphics[width=0.64\linewidth]{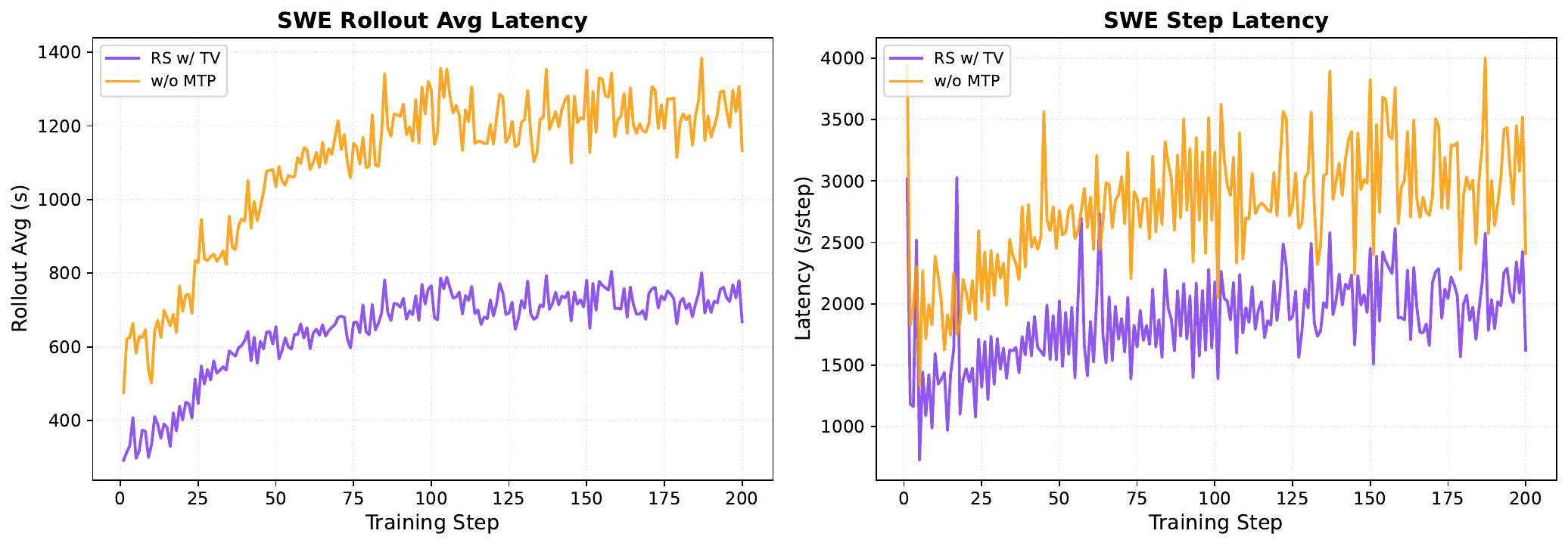}}\\
    \subfloat[Agent RL.\label{fig:rl_agent_latency}]{%
        \includegraphics[width=0.64\linewidth]{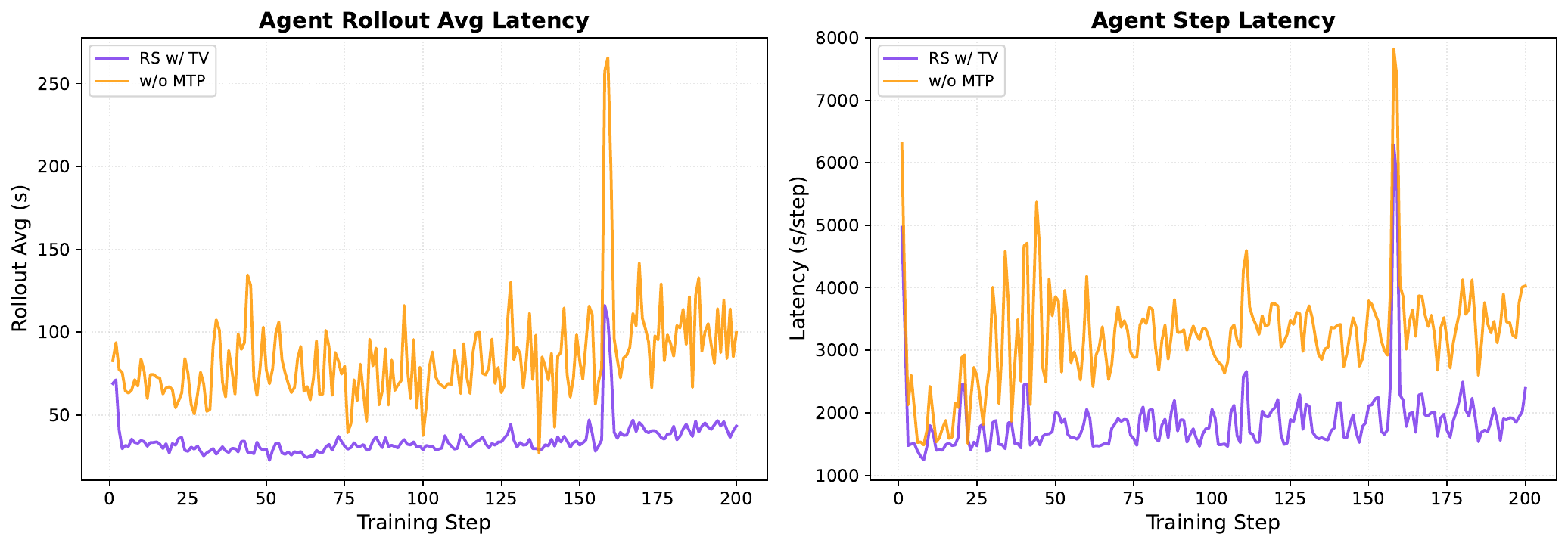}}
    \caption{Training latency comparison during RL using Qwen3.6-35A3B and Qwen3.6-Plus. MTP with rejection sampling (RS w/ TV) substantially reduces per-step latency compared to training without MTP (w/o MTP) and target-only sampling (TO).}
    \label{fig:rl_latency_combined}
\end{figure*}

\begin{figure}[t]
    \centering
    \subfloat[Reasoning RL]{
      \label{sfig:entropy_accept_hybrid}
      \includegraphics[width=0.32\linewidth]{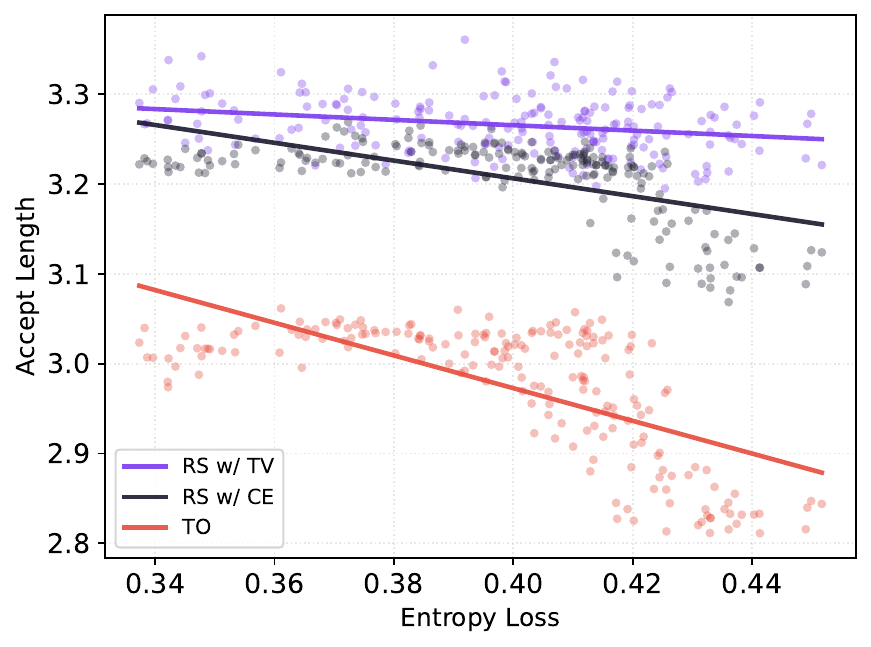}}
    \hfill
    \subfloat[SWE RL]{
      \label{sfig:entropy_accept_swe}
      \includegraphics[width=0.32\linewidth]{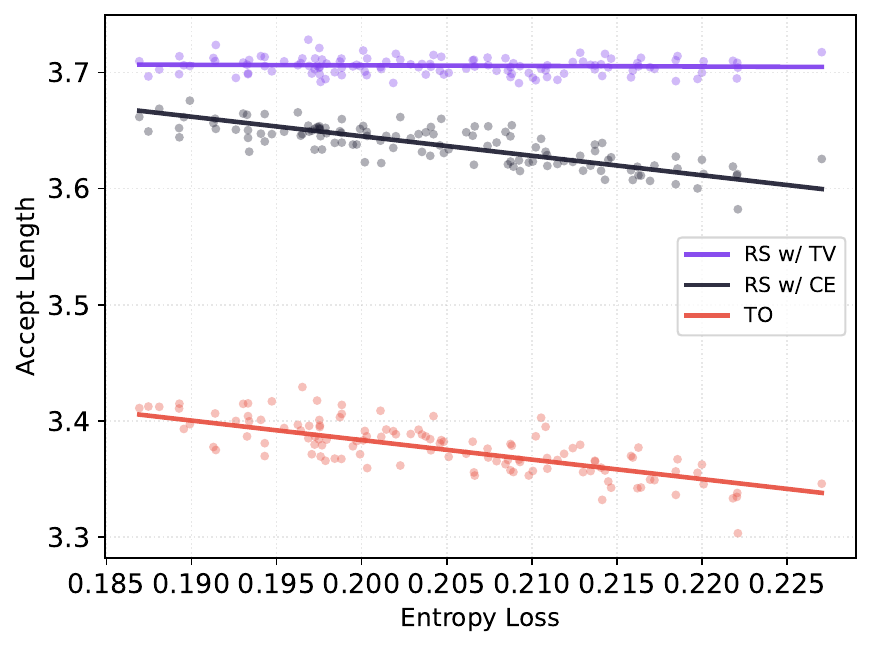}}
    \hfill
    \subfloat[SWE RL in Qwen-3.7 Max]{
      \label{sfig:entropy_accept_swe_max}
      \includegraphics[width=0.32\linewidth]{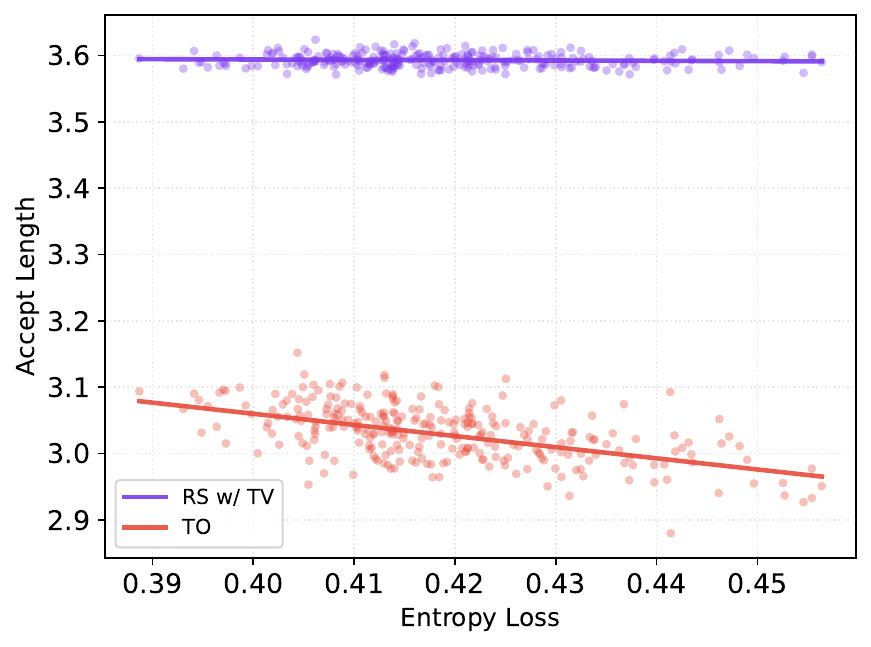}}
    \caption{Entropy loss vs.\ accept length across three RL workloads in Qwen3.6-Plus and Qwen3.7-Max.
        Each point represents one training step; the line shows the linear fit.
        TO and RS w/ CE exhibit a strong negative correlation (slope $\approx -1.68$), while RS w/ TV remains nearly flat (slope $\approx -0.06$), confirming that TV training decouples acceptance from entropy.}
    \label{fig:entropy_vs_accept}
\end{figure}

Fig.~\ref{fig:rl_accept_len} shows the accept length trends during RL training.
With rejection sampling and TV loss, \textit{Bebop} maintains stable or improving acceptance length throughout training, even as the policy maintains high entropy.
In Reasoning RL, the observed increase in acceptance rate is primarily driven by a significant drop in policy entropy during training, rather than improved draft alignment alone.
In contrast, the SWE workloads exhibit slightly increasing entropy, making them a more direct test of the training objective's robustness: here, RS w/ TV maintains stable accept lengths while target-only sampling suffers continuous degradation.
The advantage is most pronounced on SWE and other high-entropy tasks, where higher accept lengths translate directly into faster rollout completion.
Furthermore, at larger model scales (Fig.~\ref{fig:rl_swe_max_accept_len}), RS w/ TV exhibits a stronger entropy-invariant trend and sustains high acceptance rates throughout RL training, whereas target-only sampling shows a persistent acceptance rate decline.

Fig.~\ref{fig:rl_latency_combined} shows the corresponding latency improvements.
MTP with rejection sampling achieves $1.5$--$1.8\times$ reduction in per-step RL training latency compared to training without MTP, with the largest gains on agentic tasks where the rollout phase achieves up to $2.4\times$ speedup in Agentic RL.
These speedups are consistent across all workloads and provide substantial wall-clock savings at scale.

Fig.~\ref{sfig:entropy_vs_accept_length} and Fig.~\ref{fig:entropy_vs_accept} validate the linear entropy--acceptance relationships established in \S\ref{sec:formulation}.
Notably, training with TV loss substantially reduces the entropy--acceptance slope (by over $95\%$, e.g., from $-1.68$ to $-0.06$) and shifts the intercept upward.
This confirms that TV loss improves acceptance both by better aligning the draft distribution with the target and by largely decoupling the acceptance rate from the target entropy, consistent with the entropy-invariant mismatch structure analyzed in \S\ref{subsec:training_impact}, thereby enabling stable MTP acceleration gains throughout RL training.

\subsection{Benefits of Updating MTP Weights During RL}

After thorough multi-step SFT training, the model already achieves high acceptance rates (e.g., above 75\% for Qwen3.7-Max). As long as the acceptance rate is maintained, the MTP acceleration benefits are preserved throughout RL training. Furthermore, the analysis in \S\ref{sec:mtp_training} and the experimental validation in Fig.~\ref{sfig:entropy_vs_accept_length} demonstrate that rejection sampling with TV loss effectively decouples the entropy--acceptance relationship, stabilizing acceptance rates during RL.
To further quantify the benefits of updating MTP weights during RL, we compare the following training configurations:
\begin{enumerate}[label=(\arabic*), leftmargin=8mm, itemsep=1mm]
    \item RS w/ TV + TV loss: starting from the RS w/ TV checkpoint and online MTP training with TV loss;
    \item RS w/ TV + CE loss: starting from the RS w/ TV checkpoint and online MTP training with CE loss;
    \item TO + CE loss: starting from the TO checkpoint and continuing MTP training with CE loss.
\end{enumerate}

\begin{figure}[t]
    \centering
    \subfloat[Accept length with MTP weight updates.]{\includegraphics[width=0.48\columnwidth]{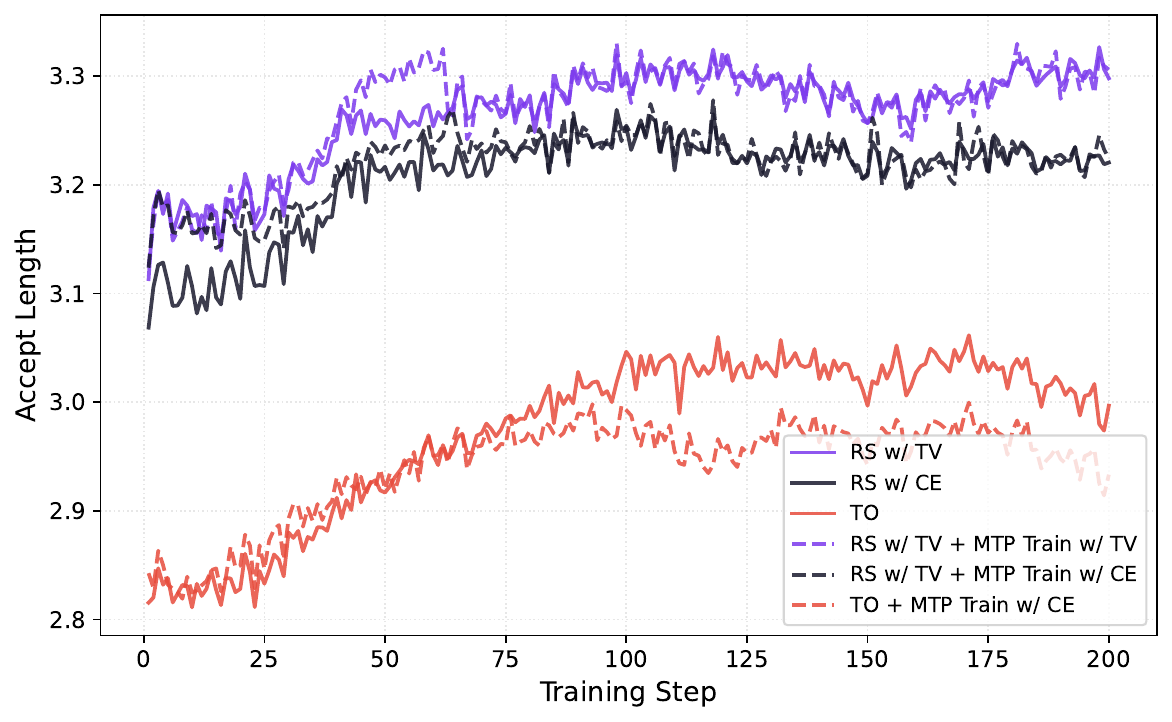}\label{fig:rl_mtp_train_accept_len}}
    \hfill
    \subfloat[Accept rate delta vs.\ throughput ratio.]{\includegraphics[width=0.48\columnwidth]{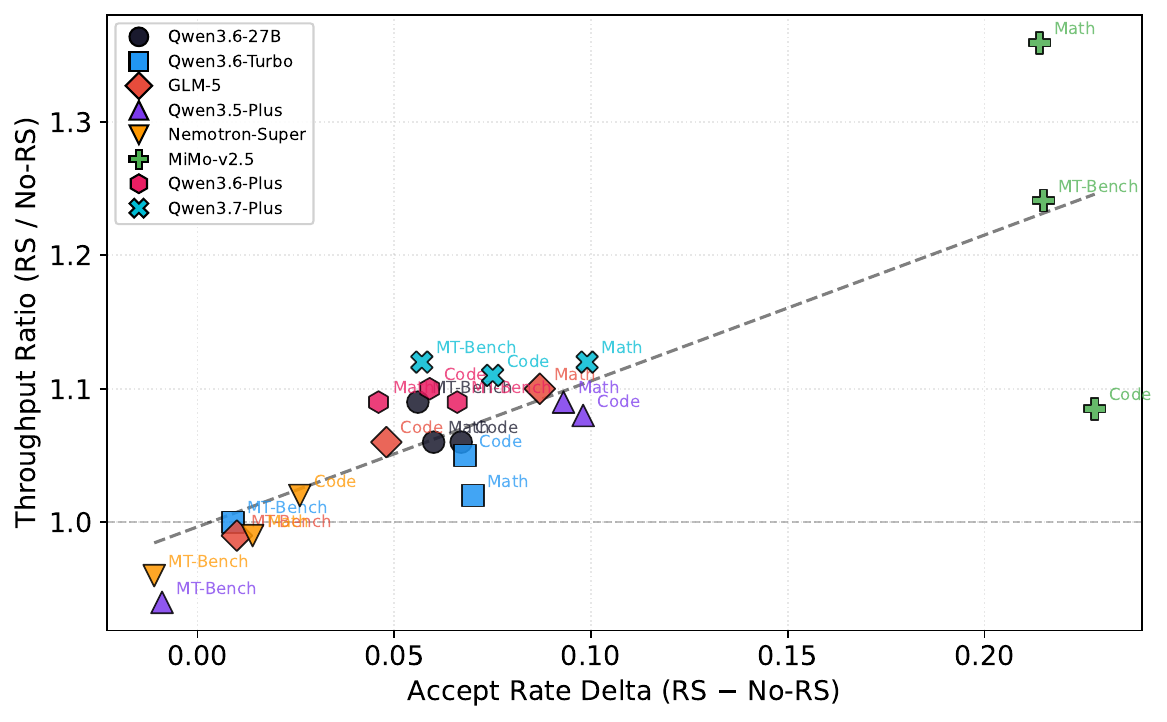}\label{sfig:accept_delta_vs_throughput}}
    \caption{(a) Accept length during RL training with and without MTP weight updates. Updating MTP weights with CE loss causes the acceptance rate to converge toward the corresponding non-updated baseline, while target-only sampling with CE loss updates can even degrade acceptance due to distribution mismatch. (b) Accept rate delta (RS $-$ No-RS) vs.\ throughput speedup ratio (RS / No-RS) across 8 models and 3 tasks ($r = 0.81$). Higher acceptance rate gains from rejection sampling translate directly to greater throughput improvements.}
    \label{fig:accept_position_and_mtp_train}
\end{figure}

As shown in Fig.~\ref{fig:rl_mtp_train_accept_len}, as RL training with MTP weight updates progresses, the acceptance rate converges toward the corresponding baseline without weight updates. For example, although RS w/ TV initially achieves a higher acceptance rate due to TV loss training, updating the MTP weights with CE loss during RL causes the acceptance rate to degrade toward that of RS w/ CE.
This shift in acceptance rate reflects changes in the draft distribution: as analyzed in \S\ref{subsec:distribution_shift_in_rl_training}, CE loss updates make the RS w/ TV draft distribution smoother, bringing it closer to the RS w/ CE distribution.
Moreover, for already well-trained MTP weights, further parameter updates during RL yield no significant improvement, with the acceptance rate closely tracking the non-updated baseline. In the case of target-only sampling, updating with CE loss can even cause acceptance rate degradation due to distribution mismatch between the draft and target models.
\section{Discussion}
\label{sec:discussion}

In this section, we provide a deeper analysis of the mechanisms behind e2e TV loss and rejection sampling, including the distributional effects of TV loss, comparison of the robustness of different acceptance methods, and analysis of how temperature, generation length, and agentic workloads affect MTP acceptance. 

\subsection{TV Loss Makes Draft Distributions Sharper}

We analyze how the TV loss affects the draft distribution's entropy compared to CE/KL training.
The TV loss produces draft distributions with entropy closer to the target entropy (but slightly higher), indicating that the draft becomes \textit{sharper} and more aligned with the target's peaked predictions.
In contrast, CE/KL training tends to produce smoother draft distributions that spread mass across the vocabulary, which is suboptimal for rejection sampling where the overlap $\sum_v \min(p(v), q(v))$ is maximized by matching the target's shape.

This sharpening effect arises from the TV loss gradient's selective behavior (Eq.~\eqref{equ:tv_grad}): it focuses optimization effort on tokens near the decision boundary ($q_j \approx p_j$) while ignoring irrelevant low-probability tokens.
Fig.~\ref{fig:entropy_gap_vs_kl} illustrates the relationship between the draft--target entropy gap and KL distance across models. Models with well-trained MTP heads exhibit a smaller entropy gap between draft and target distributions, while having a larger KL distance (see also Fig.~\ref{sfig:draft_target_distribution}).

\begin{figure*}[t]
    \centering
    \subfloat[Entropy gap vs.\ KL divergence.]{\includegraphics[width=0.33\textwidth]{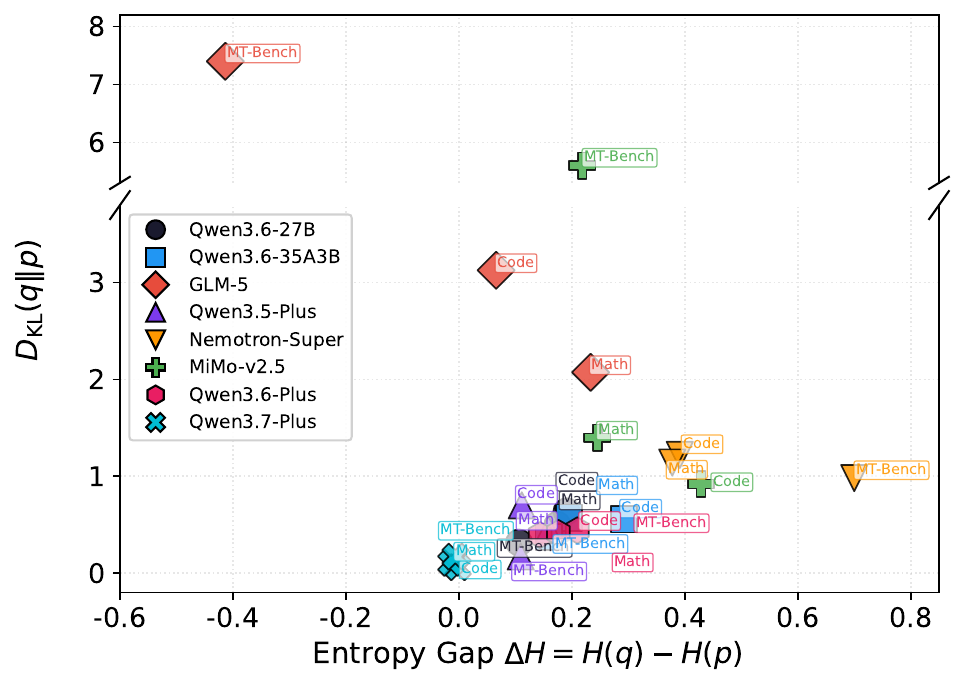}\label{sfig:entropy_gap_vs_kl}}
    \hfill
    \subfloat[Entropy gap vs.\ RS accept rate.]{\includegraphics[width=0.33\textwidth]{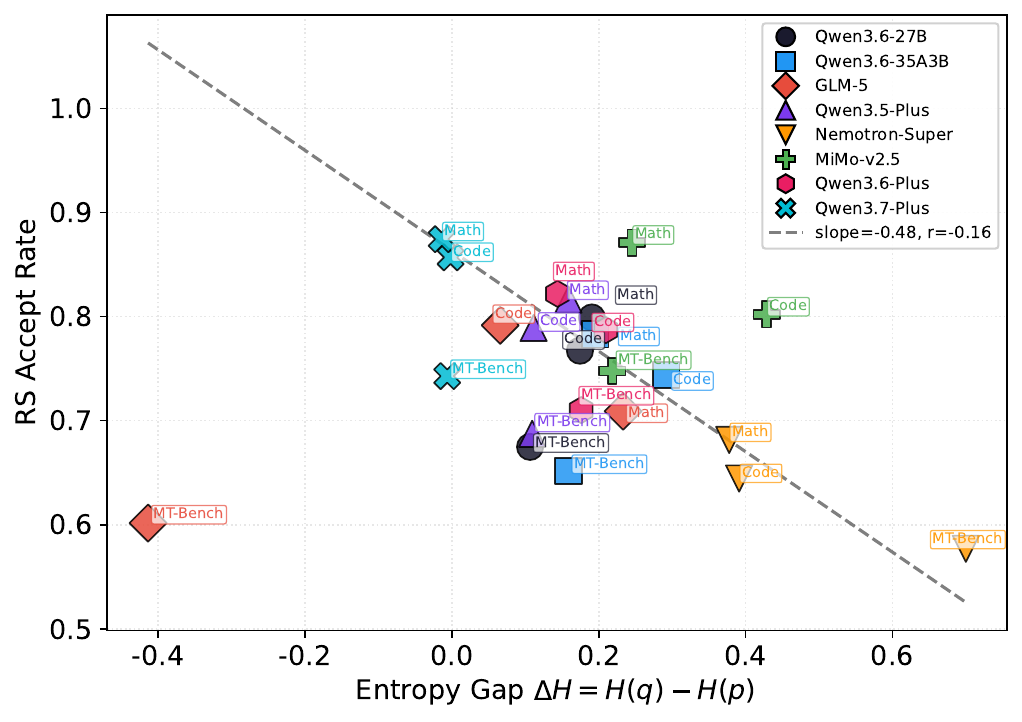}\label{sfig:entropy_gap_vs_rs_accept}}
    \hfill
    \subfloat[KL divergence vs.\ RS accept rate.]{\includegraphics[width=0.33\textwidth]{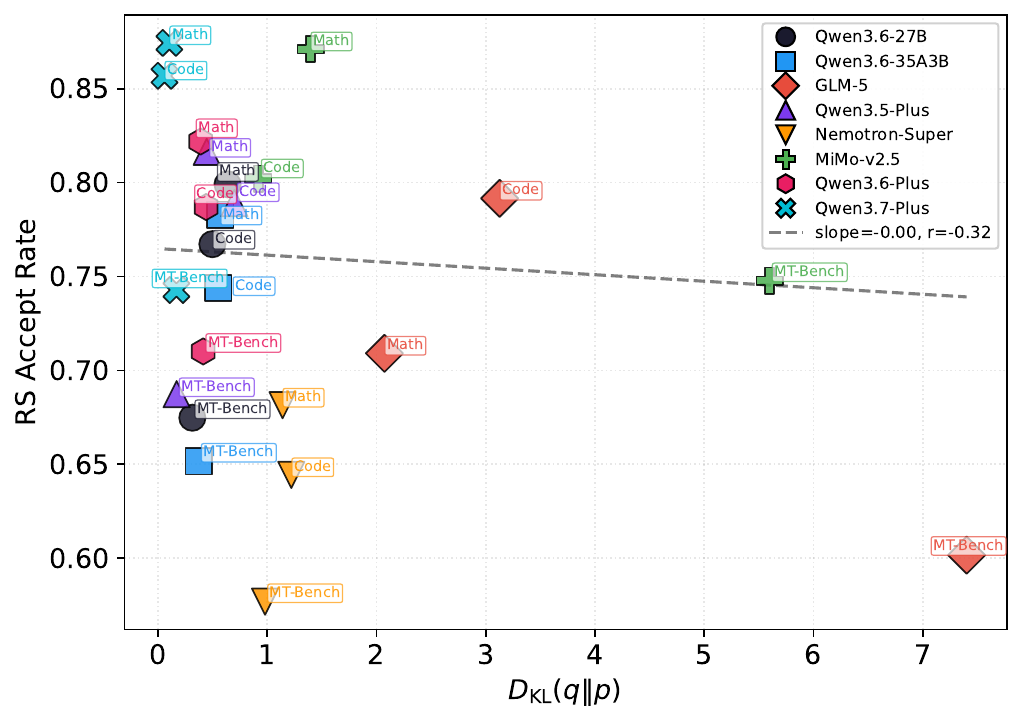}\label{sfig:kl_vs_rs_accept}}
    \caption{(a) Entropy gap $\Delta H$ vs.\ $D_{\mathrm{KL}}(q \| p)$ across models and tasks. (b) Entropy gap correlates negatively with RS acceptance rate ($r = -0.54$). (c) KL divergence shows no such correlation ($r = 0.13$), indicating that entropy gap, rather than KL, is the relevant predictor of RS acceptance.}
    \label{fig:entropy_gap_vs_kl}
\end{figure*}

\subsection{Different MTP Training Losses Induce Different Draft Distribution Patterns}
\label{subsec:distribution_shift_in_rl_training}

\begin{figure*}[t]
    \centering
    \includegraphics[width=\linewidth]{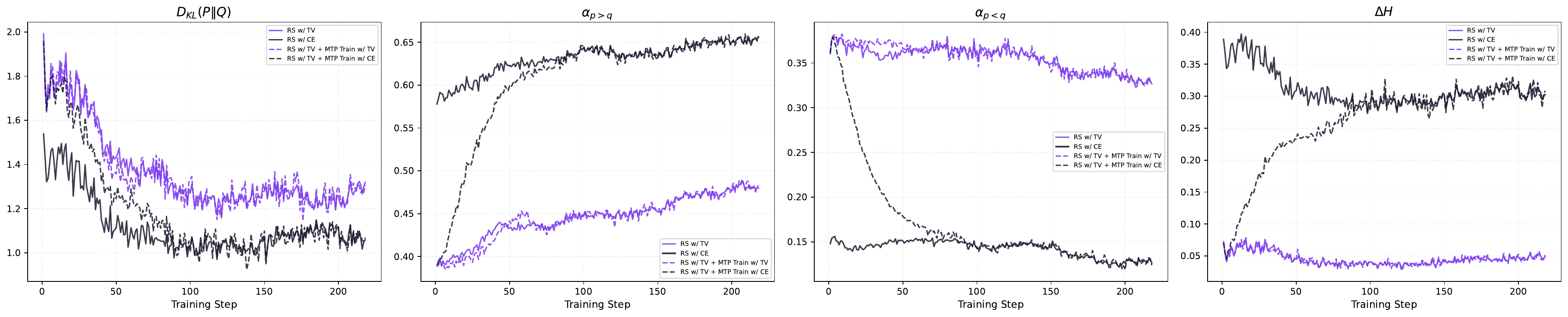}
    \caption{Evolution of MTP metrics during RL training with different MTP loss objectives. TV loss produces draft entropy closer to the target but with larger KL distance, lower $\alpha_{p > q}$, and higher $\alpha_{q > p}$. Switching the MTP training loss during RL causes the metrics to shift toward the pattern characteristic of the new loss.}
    \label{fig:rl_mtp_metrics}
\end{figure*}

Fig.~\ref{fig:rl_mtp_metrics} shows how various MTP metrics evolve when updating MTP weights with different losses during RL. TV loss produces draft entropy closer to the target model, but with a larger KL distance compared to CE loss.
Furthermore, because TV loss yields a sharper draft distribution, the corresponding $\alpha_{p > q}$ is lower while $\alpha_{q > p}$ is higher.
When different losses are used for MTP weight updates during RL, the MTP metrics shift toward the pattern characteristic of that loss. For example, with RS w/ TV + CE loss, the draft entropy gradually increases over the course of training.

\subsection{Robustness of Acceptance Methods under Policy Updates}
\label{subsec:robustness}

Although the analysis in \S\ref{subsec:decomposition} shows that the magnitude of model updates during RL is relatively small, an important distinction remains between target-only and rejection sampling in their sensitivity to ranking changes caused by RL policy updates.

\paragraph{Target-only sampling is fragile to ranking shifts.}
Target-only acceptance relies on whether the draft token falls within the target model's high-probability region (e.g., top-$k$).
This is a \textit{discrete} criterion: a token is either accepted or rejected.
When an RL gradient step causes the top-1 token to change, even by a small probability shift (e.g., $p(v_1)$ drops from $0.31$ to $0.29$ while $p(v_2)$ rises from $0.29$ to $0.31$), the draft model, still favoring the old top-1, experiences a discontinuous jump from acceptance to rejection.

\paragraph{Rejection sampling degrades smoothly.}
Under reject sampling, the acceptance rate $\alpha^{\mathrm{RS}} = \sum_v \min(p(v), q(v))$ is a \textit{continuous} function of both distributions.
The same ranking shift produces a negligible change in the TV overlap, since $\min(p(v_1), q(v_1)) + \min(p(v_2), q(v_2))$ is nearly invariant to small probability swaps.

\paragraph{High entropy amplifies the fragility gap.}
When the target entropy is high, multiple tokens have similar probabilities, making ranking changes more frequent under RL updates.
This disproportionately affects target-only sampling, where each ranking flip can cause a discrete acceptance failure.
Despite this qualitative difference, we empirically observe similar entropy--acceptance slopes for target-only and rejection sampling ($b^{\mathrm{TO}} \approx b^{\mathrm{RS}}$; see \S\ref{sec:formulation}), suggesting that the discrete fragility of target-only is offset by the cumulative TV distance growth that affects rejection sampling equally under CE/KL training.

\subsection{Correlation between Temperature and MTP Acceptance}
\label{subsec:temperature}

The sampling temperature $\tau$ directly affects the target model's entropy: $\mathcal{H}(p_\tau) = \mathcal{H}(\mathrm{softmax}(z / \tau))$ increases monotonically with $\tau$.
Combined with the linear entropy-acceptance relationship established in \S\ref{sec:formulation}, this implies that \textit{higher temperatures lead to lower MTP acceptance rates}.

\begin{figure}[t]
    \centering
    \subfloat[Acceptance length vs.\ temperature.]{\includegraphics[width=0.48\columnwidth]{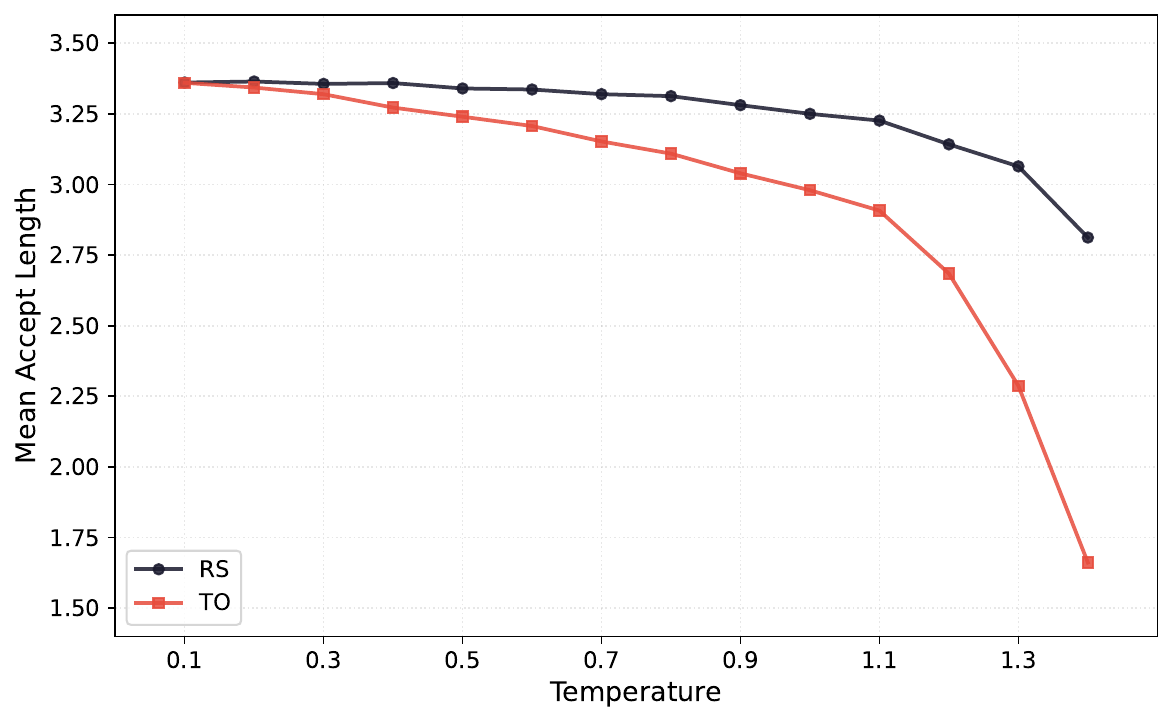}\label{sfig:accept_vs_temp}}
    \hfill
    \subfloat[Accept rate vs.\ output length.]{\includegraphics[width=0.48\columnwidth]{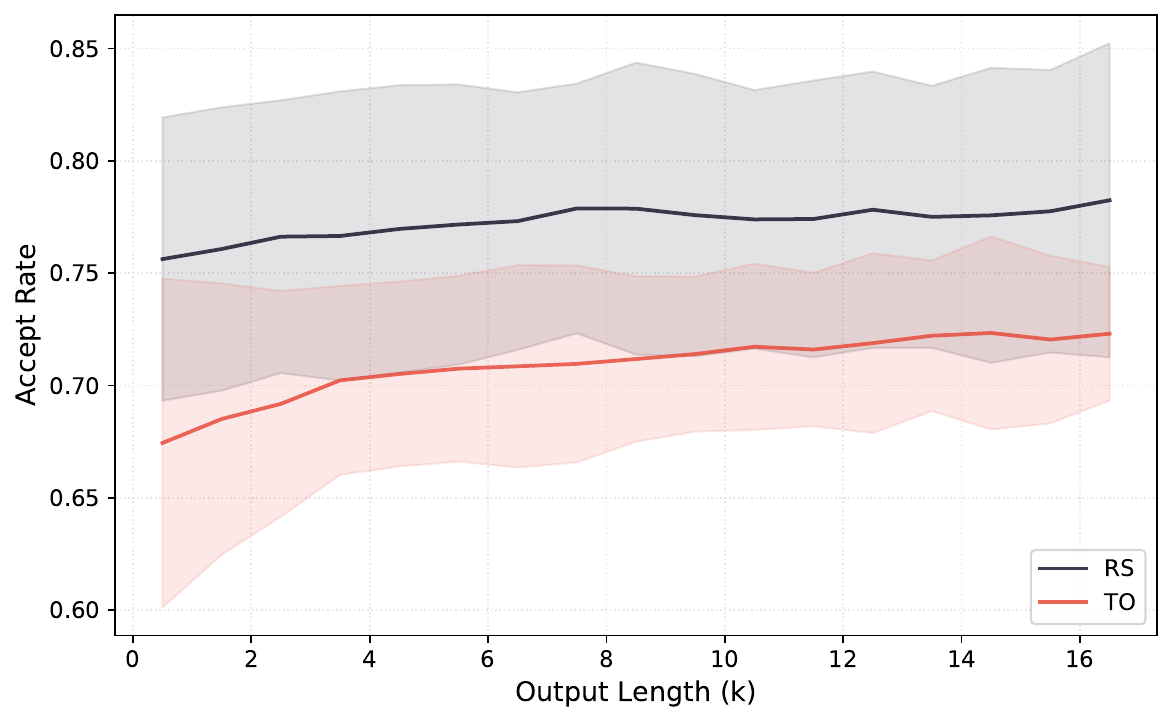}\label{sfig:accept_vs_position}}
    \caption{(a) Mean acceptance length as a function of sampling temperature. Rejection sampling maintains relatively stable acceptance lengths, while target-only sampling degrades sharply at higher temperatures. (b) MTP acceptance rate vs.\ output length (averaged over 8 models). RS maintains a stable advantage over target-only sampling across all generation positions.}
    \label{fig:temp_and_position}
\end{figure}

\begin{figure}[t]
    \centering
    \includegraphics[width=0.6\columnwidth]{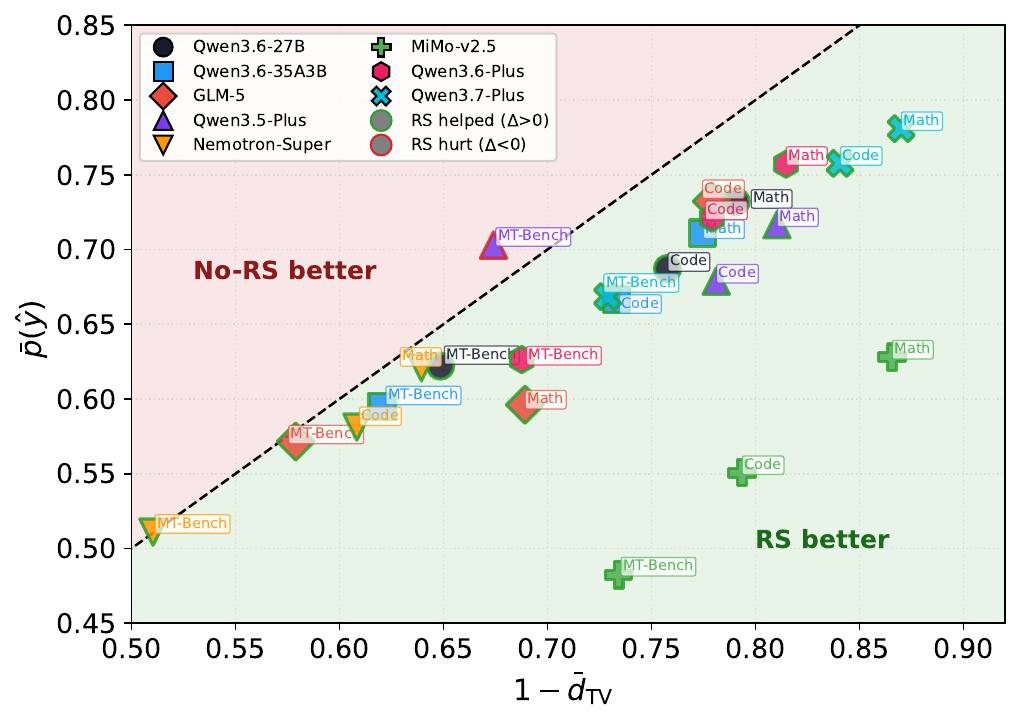}
    \caption{RS decision boundary across models (see \S\ref{subsec:rs_boundary}). Nearly all model--task combinations fall in the RS-better region, confirming that rejection sampling is beneficial for virtually all practical MTP deployments.}
    \label{sfig:rs_boundary}
\end{figure}

Fig.~\ref{sfig:accept_vs_temp} confirms this: rejection sampling maintains relatively stable acceptance lengths across temperatures, while target-only sampling degrades sharply as temperature increases.
This has practical implications for RL training, where higher temperatures are often used to encourage exploration.
Our analysis provides a quantitative framework for understanding the throughput cost of exploration via temperature scaling.

\subsection{Rejection Sampling Decision Boundary}
\label{subsec:rs_boundary}

Rejection sampling outperforms target-only sampling when
\(d_{\mathrm{TV}}(p, q) < 1 - p(\hat{y})\),
with $\hat{y} \!=\! \arg\max_y q(y)$ (see \S\ref{app:rs_boundary}).
This decision boundary provides a simple diagnostic: if the draft--target TVD is smaller than the probability mass outside the draft's top-1 token under the target, RS is preferred.

Fig.~\ref{sfig:rs_boundary} visualizes this boundary across eight models with natively trained MTP heads, spanning three task categories.
Nearly all model--task combinations (23 out of 24) fall firmly in the RS-better region, confirming that for native MTP models, rejection sampling consistently outperforms target-only sampling.
This confirms that enabling rejection sampling is beneficial for virtually all practical MTP deployments.

\subsection{Correlation between Generation Length and MTP Acceptance}
\label{subsec:gen_length}

As shown in Fig.~\ref{sfig:accept_vs_position}, we observe that MTP acceptance rates vary systematically with the position in the generated sequence.
In early positions (close to the prompt), the target model tends to have lower entropy (more predictable continuations), leading to higher acceptance rates.
As generation progresses, especially in reasoning tasks with long chains of thought, entropy can increase and acceptance rates may drop.
This position-dependent acceptance pattern suggests that \textit{adaptive MTP strategies}---adjusting the draft length $\gamma$ based on the estimated local entropy---could further improve throughput.

\subsection{Agentic RL and the Bubble Problem}
\label{subsec:agentic}

As shown in Fig.~\ref{fig:agent_accept_len}, in agentic RL settings (e.g., SWE-bench~\citep{jimenez2024swebench}), the model generates long, multi-turn interactions that involve tool calls, code execution, and iterative refinement.
These settings exhibit particularly long generation lengths and variable entropy profiles, creating periodic fluctuations in acceptance rate that tend to increase as generation progresses.

\begin{figure}[t]
    \centering
    \subfloat[Accept length during Agent RL.]{\includegraphics[width=0.48\columnwidth]{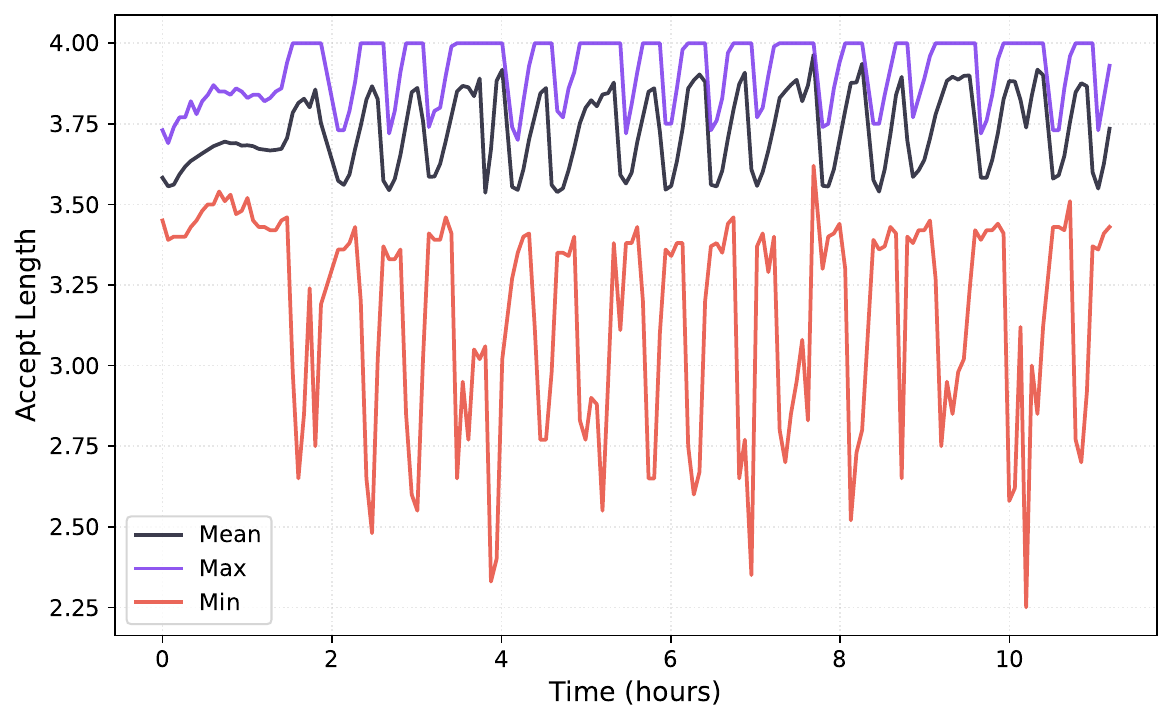}\label{fig:agent_accept_len}}
    \hfill
    \subfloat[MTP loss under top-$K$ truncation.]{\includegraphics[width=0.48\columnwidth]{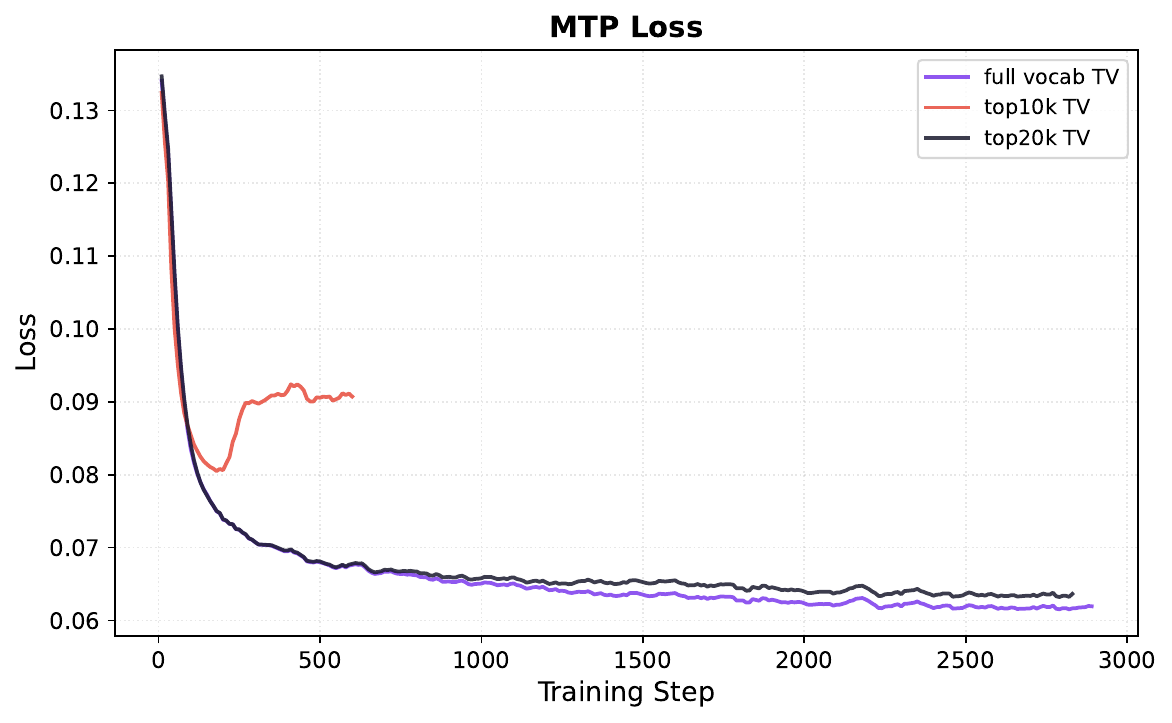}\label{fig:mtp_lm_loss_topk}}
    \caption{(a) Accept length during Agent RL. The mean acceptance length remains stable at $\sim$3.7, while the min--max range reveals periodic fluctuations across steps. (b) MTP loss curves under different top-$K$ truncation values. Smaller $K$ leads to pronounced loss spikes and training instability, while even $K = 20{,}000$ shows slower convergence compared to the full-vocabulary TV loss.}
    \label{fig:agent_and_topk}
\end{figure}

MTP is especially beneficial in agentic settings for two reasons:
(1) long generations contain abundant structured outputs---such as boilerplate code, tool call formats, and repetitive patterns---that are highly predictable, yielding high acceptance rates in these segments;
(2) multi-turn interactions and long-tail generation reduce the effective running batch size, a regime where MTP's latency benefits are amplified since the inference engine operates further from compute saturation.
Indeed, our experiments show that agentic workloads achieve the largest acceptance rate improvements (5\%) from our proposed TV loss training.

\subsection{Instability of Top-K TV Approximation}

Computing the full-vocabulary TV loss incurs high peak memory on large vocabularies. To address this, we employ a fused backward kernel that reduces intermediate activation sizes (see \S\ref{app:tv_kernel}). We also experimented with approximating TV loss via a top-$K$ truncation to further reduce peak memory. However, even with $K = 20{,}000$, we observe a slight slowdown in loss convergence and performance degradation. Smaller values of $K$ lead to pronounced loss spikes, as shown in Fig.~\ref{fig:mtp_lm_loss_topk}.
Ultimately, we adopt the fused full-vocabulary TV loss rather than the top-$K$ approximation.
\section{Related Work}
\label{sec:related}

\paragraph{Speculative Decoding.}
Speculative decoding \citep{leviathan2023fast,chen2023accelerating} accelerates autoregressive LLM inference by using a lightweight draft model to propose multiple tokens, which are then verified by the target model in parallel.
Various draft architectures have been proposed, including independent small models \citep{specinfer-miao-2024,shen2026specbranch}, early-exit heads \citep{elhoushi2024layerskip}, auxiliary heads \citep{medusa-cai-2024,li2025eagle,li2025eagle3}, MTP heads \citep{dpsk-v3,gloeckle2024bmtp,qwen3.5}, and diffusion models~\citep{chen2026dflash}.
\textit{Bebop} focuses on MTP heads that share the backbone's hidden states and analyzes their behavior under RL training dynamics.

\paragraph{Reinforcement Learning for LLMs.}
RL has become central to aligning LLMs with human preferences~\citep{ppo} and enhancing reasoning and agentic capabilities~\citep{gpt55,deepseek-v4,qwen37}.
Modern post-training pipelines typically separate RL into rollout, reward evaluation, and policy update stages, while algorithms such as GRPO~\citep{grpo} and GSPO~\citep{gspo} improve the optimization objective itself.
At the system level, asynchronous or partial-rollout frameworks reduce idle time from long-tail trajectories by decoupling inference workers from training workers~\citep{fu2025areal,wang2025roll,slime2025,qin2025seer}.
Yet they mainly hide long-tail bubbles, leaving trajectory generation as the bottleneck in long-context, multi-turn, and tool-use settings.
Related work studies RL instability from training-inference discrepancy and policy staleness~\citep{yao2025offpolicy,liu-li-2025-rl-collapse}; recent MTP methods instead update draft heads online to address draft--target mismatch~\citep{chen2025respec,iso2026accelerating,minimax2026forge,li2025mtp}.
However, we find that acceptance rate fluctuations during RL are primarily driven by shifts in the target model's entropy rather than draft--target mismatch, and that target entropy exhibits a linear relationship with MTP acceptance length, an observation also noted by~\citet{mimo-v2-flash}.
Our work is complementary: it accelerates rollout without changing the RL objective or scheduler, and identifies entropy shifts as the dominant factor behind MTP acceptance degradation.

\paragraph{Total Variation Distance in Machine Learning.}
The TV distance is a standard measure for comparing probability distributions, and has been used in distribution testing~\citep{canonne2020survey}, generative modeling~\citep{nowozin2016fgan}, and convergence analysis of Markov chains~\citep{levin2017markov}.
In speculative decoding, the rejection-sampling acceptance rate equals the distributional overlap, i.e., $\alpha=\sum_y \min(p_y,q_y)=1-d_{\mathrm{TV}}(p,q)$~\citep{leviathan2023fast,chen2023accelerating}.
This connection has motivated acceptance-oriented objectives, including LK Losses for directly optimizing speculative decoding acceptance rate~\citep{samarin2026lk}.
However, these works focus on inference-time speculative decoding with a fixed target model.
Recent work has also explored using reverse KL to optimize the student model in OPD~\citep{lu2025onpolicydistillation,lei2026draftopd}, though its training objective still differs substantially from directly maximizing the rejection sampling acceptance rate.
To our knowledge, we are the first to propose directly optimizing TV distance as a training objective for MTP heads, and the first to analyze its behavior during RL training.

\section{Conclusion}

We present \textit{Bebop}, a systematic study of Multi-Token Prediction (MTP) in the context of reinforcement learning for large language models.
Our analysis reveals three key findings:
(1) MTP acceptance rates under both target-only and rejection sampling are linearly constrained by the target model's entropy;
(2) \textit{Bebop}'s end-to-end TV loss directly optimizes multi-step rejection sampling acceptance, yielding ${\sim}10\%$ acceptance-rate improvements, up to 95\% acceptance, and up to 25\% extra inference throughput over conventional CE/KL objectives;
(3) Lightweight pre-RL adaptation with TV loss and rejection sampling is sufficient to maintain high MTP acceptance rates throughout RL training, eliminating the need for costly online MTP updates.
Extensive experiments with Qwen3.5, 3.6, and 3.7 models demonstrate that \textit{Bebop} achieves up to $1.8\times$ end-to-end acceleration in async RL pipelines.

\paragraph{Limitations.}
Our theoretical analysis of the entropy-acceptance relationship relies on modeling assumptions (uniform vs.\ probability-proportional mismatch) that are heuristically motivated by gradient structures rather than formally proven; tightening these assumptions remains an open question.
Additionally, the entropy invariance guaranteed by TV training is distribution-conditional: it holds within the entropy range covered by the SFT training data, but when RL exploration drives the policy entropy significantly beyond this range, the draft head encounters out-of-distribution target distributions for which the mismatch ratio $\delta$ is no longer bounded, restoring an entropy-acceptance dependence comparable to that of CE/KL training.
In such cases, MTP co-training with TV loss during RL is recommended to extend the draft head's effective coverage to the new entropy regime.

\bibliographystyle{plainnat}
\bibliography{reference}

\appendix

\section{Derivation of TV Loss Gradient}
\label{app:tv_grad}

We provide the full derivation of the TV loss gradient (Eq.~\eqref{equ:tv_grad}).

Let the draft head output logits $z \in \mathbb{R}^{|\mathcal{V}|}$ with $q_j = \mathrm{softmax}(z)_j = \frac{e^{z_j}}{\sum_k e^{z_k}}$.
The target model probability $p$ is treated as a constant (detached).
The TV loss is:
\begin{align*}
\mathcal{L}_{\mathrm{TV}} = 1 - \sum_{v} \min(p_v, q_v).
\end{align*}

The gradient with respect to $z_j$ is:
\begin{align*}
\frac{\partial \mathcal{L}_{\mathrm{TV}}}{\partial z_j} = -\frac{\partial}{\partial z_j} \sum_{v} \min(p_v, q_v).
\end{align*}

Since $p$ is constant, the subgradient of $\min(p_v, q_v)$ with respect to $q_v$ is:
\begin{align*}
\frac{\partial}{\partial q_v} \min(p_v, q_v) = \mathbbm{1}[q_v \leq p_v].
\end{align*}

Using the chain rule with the softmax Jacobian $\frac{\partial q_v}{\partial z_j} = q_v(\delta_{vj} - q_j)$:
\begin{align*}
\frac{\partial}{\partial z_j} \sum_{v} \min(p_v, q_v) &= \sum_{v} \mathbbm{1}[q_v \leq p_v] \cdot q_v (\delta_{vj} - q_j) \\
&= \underbrace{\mathbbm{1}[q_j \leq p_j] \cdot q_j}_{v = j \text{ term}} - q_j \underbrace{\sum_{v} \mathbbm{1}[q_v \leq p_v] \cdot q_v}_{\triangleq\; S} \\
&= q_j \bigl[ \mathbbm{1}[q_j \leq p_j] - S \bigr].
\end{align*}

Therefore:
\begin{align*}
\frac{\partial \mathcal{L}_{\mathrm{TV}}}{\partial z_j} = -q_j \bigl[ \mathbbm{1}[q_j \leq p_j] - S \bigr],
\end{align*}
where $S = \sum_{v} \mathbbm{1}[q_v \leq p_v] \cdot q_v \in [0, 1]$.

\paragraph{Boundedness.}
Since $q_j \in [0, 1]$ and $|\mathbbm{1}[q_j \leq p_j] - S| \leq 1$:
\begin{align*}
\left| \frac{\partial \mathcal{L}_{\mathrm{TV}}}{\partial z_j} \right| = q_j \cdot |\mathbbm{1}[q_j \leq p_j] - S| \leq q_j \leq 1.
\end{align*}

\section{Comparison with Forward KL Divergence Gradient}
\label{app:kl_comparison}

For comparison, the gradient of the forward KL divergence $D_{\mathrm{KL}}(p \| q) = \sum_v p_v \log \frac{p_v}{q_v}$ with respect to $z_j$ is:
\begin{align*}
\frac{\partial D_{\mathrm{KL}}(p \| q)}{\partial z_j} = q_j - p_j.
\end{align*}

Key differences from the TV loss gradient:
\begin{enumerate}
    \item The forward KL gradient applies a nonzero force to \textit{every} token where $q_j \neq p_j$, including tokens with negligible probability. The TV gradient is proportional to $q_j$, so it automatically ignores low-probability tokens.
    \item The forward KL gradient does not distinguish between tokens that would be accepted vs.\ rejected under rejection sampling. The TV gradient explicitly incorporates this distinction via the indicator $\mathbbm{1}[q_j \leq p_j]$.
    \item The forward KL gradient can be large when $q_j \gg p_j$ (overconfident draft). The TV gradient is bounded by $q_j$.
\end{enumerate}

\section{Analysis of the Reverse KL Divergence}
\label{app:reverse_kl}

The preceding analysis focuses on the forward KL divergence $D_{\mathrm{KL}}(p\|q)$, which is equivalent to CE loss up to a constant.
A natural question is whether the reverse KL divergence $D_{\mathrm{KL}}(q\|p) = \sum_v q_v \log \frac{q_v}{p_v}$ would be a better training objective for rejection sampling.

\paragraph{Gradient derivation.}
The gradient of the reverse KL divergence with respect to the draft logits $z_j$ is:
\begin{align}
\frac{\partial D_{\mathrm{KL}}(q \| p)}{\partial z_j}
&= \sum_v \bigl[\log(q_v / p_v) + 1\bigr] \cdot q_v(\delta_{vj} - q_j) \nonumber \\
&= q_j \bigl[\log(q_j / p_j) + 1\bigr] - q_j \sum_v q_v \bigl[\log(q_v / p_v) + 1\bigr] \nonumber \\
&= q_j \bigl[\log(q_j / p_j) - D_{\mathrm{KL}}(q \| p)\bigr].
\label{eq:rkl_gradient}
\end{align}

\paragraph{Comparison of gradient structures.}
Table~\ref{tab:grad_comparison} summarizes the three gradient structures.

The reverse KL gradient shares the desirable $q_j$-proportionality with the TV gradient, meaning low-probability tokens automatically receive negligible optimization pressure.
This suggests that the reverse KL should produce a mismatch that scales more proportionally with $q_j$ than the uniform mismatch of forward KL, and consequently exhibit weaker entropy--acceptance coupling than the forward KL.

\paragraph{Why reverse KL is still suboptimal.}
Despite the improved gradient structure, the reverse KL remains suboptimal for maximizing the rejection sampling acceptance rate for three reasons:

\begin{enumerate}[leftmargin=8mm, itemsep=2mm]
    \item \textbf{Zero-forcing behavior.}
    The reverse KL does not penalize $q(v) \to 0$ even when $p(v) > 0$, since $\lim_{q \to 0} q \log(q/p) = 0$.
    This ``mode-seeking'' property allows the draft to drop modes of $p$, directly forfeiting the overlap $\min(p(v), q(v))$ at those tokens and reducing the acceptance rate.
    In contrast, the forward KL is ``zero-avoiding'' ($D_{\mathrm{KL}}(p\|q) \to \infty$ when $q(v) \to 0$ with $p(v) > 0$), enforcing full support coverage.
    The TV loss is neither zero-forcing nor zero-avoiding: it selectively allocates capacity to tokens where the marginal overlap improvement is largest.

    \item \textbf{Asymmetric over-/under-estimation penalty.}
    The acceptance ratio of rejection sampling depends on $\sum_v \min(p(v), q(v))$, which penalizes over-estimation ($q > p$) and under-estimation ($q < p$) symmetrically---both reduce the overlap by $|q(v) - p(v)|$.
    The reverse KL imposes an \textit{asymmetric} penalty: over-estimation ($q_j > p_j$, so $\log(q_j/p_j) > 0$) incurs a much stronger gradient than under-estimation.
    This drives the draft toward $q(v) \leq p(v)$ across most tokens, which ensures individual-token acceptance probability $\min(1, p/q) = 1$ but reduces the sampling probability of those tokens, yielding suboptimal total overlap.

    \item \textbf{Indirect optimization target.}
    Like the forward KL, the reverse KL does not directly optimize $d_{\mathrm{TV}}(p, q)$.
    The log-ratio $\log(q_j/p_j)$ in the reverse KL gradient provides a soft, nonlinear signal, whereas the TV gradient's indicator $\mathbbm{1}[q_j \leq p_j]$ provides a hard, direct signal aligned with the rejection sampling decision boundary.
\end{enumerate}

\paragraph{Summary.}
In terms of suitability for optimizing rejection sampling acceptance rates:
\begin{align*}
\text{TV loss} \;>\; \text{Reverse KL} \;>\; \text{Forward KL (CE)}.
\end{align*}
The reverse KL improves upon the forward KL through better capacity allocation (gradient $\propto q_j$), but remains suboptimal due to its zero-forcing behavior and asymmetric penalty structure.
The TV loss directly optimizes the quantity of interest and avoids both failure modes.

\section{Entropy-Acceptance Relationship under Different Training Objectives}
\label{app:entropy_accept}

We provide a detailed analysis of how the target model's entropy $\mathcal{H}(p)$ constrains MTP acceptance rates under different acceptance methods and training objectives.

\subsection{Setup and Notation}

Consider a fixed position $t$ in the generation process.
Let $p \in \Delta^{|\mathcal{V}|}$ denote the target model's distribution and $q \in \Delta^{|\mathcal{V}|}$ the draft model's distribution.
The draft model is parameterized by $q_\theta$ with logits $z \in \mathbb{R}^{|\mathcal{V}|}$ and $q_j = \mathrm{softmax}(z)_j$.
Due to finite model capacity, the draft cannot perfectly match $p$ in general, and the per-token mismatch structure depends critically on the training objective.

We define the \textit{effective support} of $p$ at threshold $\tau$ as $\mathcal{S}_\tau(p) = \{v \in \mathcal{V} : p(v) > \tau\}$, and recall that the effective support size is related to entropy via the perplexity: $|\mathcal{S}_{\mathrm{eff}}(p)| \approx \exp(\mathcal{H}(p))$.

For the analysis below, we consider two mismatch structures depending on the training objective:
\begin{itemize}[leftmargin=8mm, itemsep=1mm]
    \item \textbf{Uniform mismatch} (CE/KL training): $q^*(v) = p(v) + \eta_v$ with $|\eta_v| \lesssim \sigma$ and $\sum_v \eta_v = 0$, where $\sigma$ is approximately uniform across tokens (see \S\ref{subsec:ce_rs} for justification).
    \item \textbf{Probability-proportional mismatch} (TV training): $|q^*(v) - p(v)| \lesssim \delta \cdot p(v)$, where the absolute error scales with the token probability (derived under the capacity-allocation assumption in \S\ref{subsec:tv_rs}).
\end{itemize}

\subsection{Target-Only Sampling}
\label{subsec:to_analysis}

Under target-only sampling, the draft token is selected greedily as $\hat{y} = \arg\max_y q(y)$ and accepted with probability $p(\hat{y})$, giving acceptance rate:
\begin{align}
\alpha^{\mathrm{TO}} = p\!\left(\arg\max_y\, q(y)\right).
\end{align}

\paragraph{Perfect draft case.}
For a well-trained draft model where $\arg\max_y q(y) = \arg\max_y p(y)$ (i.e., the draft correctly identifies the target's top-1 token), the acceptance rate reduces to:
\begin{align}
\alpha^{\mathrm{TO}} = \max_y\, p(y).
\end{align}

\paragraph{Relationship to Shannon entropy.}
The quantity $\max_y p(y)$ is a monotonically decreasing function of $\mathcal{H}(p)$: as entropy increases, the distribution spreads and the maximum probability decreases.
A standard bound gives $\max_y p(y) \geq \exp(-\mathcal{H}(p))$, so the acceptance rate is lower-bounded by $\exp(-\mathcal{H}(p))$.

\paragraph{Linearization.}
Since $\alpha^{\mathrm{TO}} = \max_y p(y)$ is a smooth, monotonically decreasing function of $\mathcal{H}(p)$, we can write $\alpha^{\mathrm{TO}} = f(\mathcal{H}(p))$ for some decreasing function $f$.
Performing a first-order Taylor expansion around the mean operating entropy $\bar{\mathcal{H}} = \frac{1}{2}(\mathcal{H}_{\min} + \mathcal{H}_{\max})$:
\begin{align}
\alpha^{\mathrm{TO}} = f(\mathcal{H}) &\approx f(\bar{\mathcal{H}}) + f'(\bar{\mathcal{H}}) \cdot (\mathcal{H}(p) - \bar{\mathcal{H}}) \nonumber \\
&= \underbrace{\bigl[f(\bar{\mathcal{H}}) - f'(\bar{\mathcal{H}})\bar{\mathcal{H}}\bigr]}_{a^{\mathrm{TO}}} + \underbrace{f'(\bar{\mathcal{H}})}_{-b^{\mathrm{TO}}} \cdot \mathcal{H}(p).
\end{align}
Since $f$ is decreasing, $f'(\bar{\mathcal{H}}) < 0$, so $b^{\mathrm{TO}} = -f'(\bar{\mathcal{H}}) > 0$, yielding:
\begin{align}
\alpha^{\mathrm{TO}} \approx a^{\mathrm{TO}} - b^{\mathrm{TO}} \cdot \mathcal{H}(p).
\end{align}
The lower bound $f(\mathcal{H}) \geq \exp(-\mathcal{H})$ provides an order-of-magnitude estimate for the slope: $b^{\mathrm{TO}} \sim \exp(-\bar{\mathcal{H}})$.
Empirically, this linear approximation is remarkably robust across model scales, tasks, and training stages (Fig.~\ref{sfig:entropy_vs_accept_length}).

\paragraph{Imperfect draft correction.}
With an imperfect draft under uniform per-token mismatch, a ranking error $\arg\max q \neq \arg\max p$ occurs when the gap between the top two target probabilities satisfies $p(v^*_1) - p(v^*_2) \lesssim 2\sigma$.
High-entropy distributions have smaller gaps among top tokens, making ranking errors more frequent.
When a ranking error occurs, the acceptance rate drops from $p(v^*_1)$ to $p(\hat{v}) < p(v^*_1)$, introducing an additional entropy-dependent deficit.
Both effects reinforce the negative slope, so the linear approximation still holds with a potentially steeper slope:
\begin{align}
\alpha^{\mathrm{TO}} \approx a^{\mathrm{TO}} - b^{\mathrm{TO}} \cdot \mathcal{H}(p),
\end{align}
where the slope $b^{\mathrm{TO}}$ is empirically comparable to $b^{\mathrm{RS}}$ (see \S\ref{sec:exp}), though the two arise from different mechanisms: $b^{\mathrm{TO}}$ is driven by the concentration of $\max_y p(y)$ and ranking instability, while $b^{\mathrm{RS}}$ is driven by the accumulation of per-token TV residuals.

\subsection{Rejection Sampling with CE/KL Training}
\label{subsec:ce_rs}

The rejection sampling acceptance rate is $\alpha^{\mathrm{RS}} = 1 - d_{\mathrm{TV}}(p, q)$ (Eq.~\eqref{equ:reject_sample}).
We analyze how CE/KL training produces entropy-dependent acceptance rates through its uniform per-token mismatch structure.

\paragraph{Gradient structure.}
The gradient of $D_{\mathrm{KL}}(p \| q)$ with respect to logits is $\frac{\partial D_{\mathrm{KL}}}{\partial z_j} = q_j - p_j$ (see \S\ref{app:kl_comparison}).
The gradient magnitude $|q_j - p_j|$ is determined by the absolute difference between $p_j$ and $q_j$, not by the magnitude of $p_j$ itself.
Under gradient-based optimization, each token receives optimization pressure proportional to $|q_j - p_j|$, regardless of whether $p_j = 10^{-1}$ or $p_j = 10^{-5}$.
This uniform pressure produces approximately uniform per-token mismatch: $q^*_{\mathrm{CE}}(v) = p(v) + \eta_v$ with $|\eta_v| \lesssim \sigma$.

\paragraph{TV distance derivation.}
Under uniform per-token mismatch:
\begin{align}
d_{\mathrm{TV}}(p, q^*_{\mathrm{CE}}) &= \frac{1}{2}\sum_{v \in \mathcal{V}} |p(v) - q^*(v)| = \frac{1}{2}\sum_{v} |\eta_v|.
\end{align}

The sum decomposes over the effective support $\mathcal{S}_\tau(p)$ and its complement:
\begin{align}
d_{\mathrm{TV}} &= \frac{1}{2}\sum_{v \in \mathcal{S}_\tau} |\eta_v| + \frac{1}{2}\sum_{v \notin \mathcal{S}_\tau} |\eta_v|.
\end{align}

For the complement term, since $p(v) \approx 0$ outside the effective support and $q^*(v) \geq 0$, we have $|\eta_v| = |q^*(v) - p(v)| \leq q^*(v)$, so:
\begin{align}
\frac{1}{2}\sum_{v \notin \mathcal{S}_\tau} |\eta_v| \leq \frac{1}{2}\sum_{v \notin \mathcal{S}_\tau} q^*(v) \leq \frac{1}{2}\left(1 - \sum_{v \in \mathcal{S}_\tau} q^*(v)\right),
\end{align}
which is a small constant independent of $\mathcal{H}(p)$ (since most probability mass concentrates in the effective support for both $p$ and $q^*$).
The entropy-dependent contribution therefore comes from the effective support, where mismatch is fully realized at the $\sigma$ level.
With $|\eta_v| \lesssim \sigma$ for $v \in \mathcal{S}_\tau$ and $|\mathcal{S}_\tau(p)| \approx \exp(\mathcal{H}(p))$:
\begin{align}
d_{\mathrm{TV}}(p, q^*_{\mathrm{CE}}) \approx \frac{\sigma}{2} \cdot \exp(\mathcal{H}(p)).
\end{align}

Therefore:
\begin{align}
\alpha^{\mathrm{RS}}_{\mathrm{CE}} = 1 - d_{\mathrm{TV}} \approx 1 - \frac{\sigma}{2} \exp(\mathcal{H}(p)).
\end{align}

\paragraph{Linear approximation.}
In the regime where $\mathcal{H}(p)$ varies over a moderate range $[\mathcal{H}_{\min}, \mathcal{H}_{\max}]$ (e.g., $[0.1, 0.5]$ during RL training), the exponential can be linearized via a first-order Taylor expansion around $\bar{\mathcal{H}} = \frac{1}{2}(\mathcal{H}_{\min} + \mathcal{H}_{\max})$:
\begin{align}
\exp(\mathcal{H}(p)) \approx \exp(\bar{\mathcal{H}}) \cdot \bigl(1 + (\mathcal{H}(p) - \bar{\mathcal{H}})\bigr).
\end{align}
Substituting:
\begin{align}
\alpha^{\mathrm{RS}}_{\mathrm{CE}} \approx a^{\mathrm{RS}} - b^{\mathrm{RS}} \cdot \mathcal{H}(p),
\end{align}
where $a^{\mathrm{RS}} = 1 - \frac{\sigma}{2}\exp(\bar{\mathcal{H}})(1 - \bar{\mathcal{H}})$ and $b^{\mathrm{RS}} = \frac{\sigma}{2}\exp(\bar{\mathcal{H}})$ are positive constants.
This explains the empirically observed linear negative correlation between entropy and acceptance rate under CE/KL training.

\paragraph{Intuition.}
CE/KL training distributes optimization resources uniformly across all tokens.
When $\mathcal{H}(p)$ is low, $p$ concentrates on a few tokens, and the draft only needs to match these accurately --- the additive errors on the remaining tokens contribute negligibly to $d_{\mathrm{TV}}$.
When $\mathcal{H}(p)$ is high, $p$ spreads across $\exp(\mathcal{H}(p))$ tokens, and the uniform additive errors accumulate into a large TV distance.

\paragraph{Why CE/KL training is suboptimal for rejection sampling.}
Pinsker's inequality states $d_{\mathrm{TV}}(p, q) \leq \sqrt{\frac{1}{2}D_{\mathrm{KL}}(p \| q)}$, relating the two divergences.
However, the suboptimality of CE/KL training for rejection sampling does not stem from the looseness of this bound per se, but from how the KL gradient allocates model capacity across the vocabulary.

Under uniform per-token mismatch, a second-order expansion of $D_{\mathrm{KL}}$ gives:
\begin{align}
D_{\mathrm{KL}}(p \| q^*_{\mathrm{CE}}) &\approx \frac{1}{2}\sum_v \frac{\eta_v^2}{p(v)}, \\
d_{\mathrm{TV}}(p, q^*_{\mathrm{CE}}) &= \frac{1}{2}\sum_v |\eta_v|.
\end{align}
By the Cauchy--Schwarz inequality, $\left(\sum_v |\eta_v|\right)^2 \leq \left(\sum_v \eta_v^2 / p(v)\right)\left(\sum_v p(v)\right)$, which recovers Pinsker's bound $(2\,d_{\mathrm{TV}})^2 \leq 2\,D_{\mathrm{KL}}$.
Equality holds when $|\eta_v| \propto \sqrt{p(v)}$---i.e., the bound is \textit{tightest} when $p$ is uniform.

The fundamental issue is instead one of \textbf{capacity allocation}.
The KL gradient $\partial D_{\mathrm{KL}} / \partial z_j = q_j - p_j$ applies optimization pressure proportional to the absolute difference $|q_j - p_j|$, distributing finite model capacity roughly uniformly across all tokens, including those with negligible target probability.
Under this uniform allocation, each token contributes a uniform mismatch $|\eta_v| \lesssim \sigma$, and the resulting TV distance scales with the number of tokens in the effective support:
\begin{align}
d_{\mathrm{TV}} \approx \frac{\sigma}{2} \cdot |\mathcal{S}_{\mathrm{eff}}(p)| \propto \exp(\mathcal{H}(p)) \cdot \sigma.
\end{align}
High-entropy distributions spread mass across more tokens ($|\mathcal{S}_{\mathrm{eff}}| \approx \exp(\mathcal{H})$), accumulating more per-token residuals into a larger TV distance, even though the KL divergence is also being minimized.
This is why CE/KL-trained drafts exhibit a strong negative entropy--acceptance correlation: the KL objective does not distinguish between tokens that matter for the rejection sampling acceptance decision and those that do not.

\subsection{Rejection Sampling with TV Training}
\label{subsec:tv_rs}

\paragraph{Gradient structure.}
The gradient of the TV loss with respect to logits is $\frac{\partial \mathcal{L}_{\mathrm{TV}}}{\partial z_j} = -q_j [\mathbbm{1}[q_j \leq p_j] - S]$ (Eq.~\eqref{equ:tv_grad}).

\textbf{Key observation:} The gradient is \textit{proportional to $q_j$}. This means:
\begin{itemize}[leftmargin=8mm, itemsep=1mm]
    \item High-probability tokens ($q_j$ large) receive a strong gradient signal and are optimized accurately.
    \item Low-probability tokens ($q_j \approx 0$) receive near-zero gradient, so the optimizer does not waste capacity on them.
\end{itemize}

\paragraph{TV gradient as a self-correcting mechanism.}
Define the probability ratio $r_j = q_j / p_j$. The TV gradient (Eq.~\eqref{equ:tv_grad}) acts as a self-correcting feedback that drives $r_j \to 1$:
\begin{itemize}[leftmargin=8mm, itemsep=1mm]
    \item When $r_j < 1$ (i.e., $q_j < p_j$): the indicator $\mathbbm{1}[q_j \leq p_j] = 1$, so
    \begin{align}
    \frac{\partial \mathcal{L}_{\mathrm{TV}}}{\partial z_j} = -q_j(1 - S) < 0,
    \end{align}
    and gradient descent increases $z_j$, pushing $q_j$ upward and $r_j$ toward $1$.
    \item When $r_j > 1$ (i.e., $q_j > p_j$): the indicator $\mathbbm{1}[q_j \leq p_j] = 0$, so
    \begin{align}
    \frac{\partial \mathcal{L}_{\mathrm{TV}}}{\partial z_j} = q_j \cdot S > 0,
    \end{align}
    and gradient descent decreases $z_j$, pushing $q_j$ downward and $r_j$ toward $1$.
\end{itemize}
In both cases, TV training drives $r_j \to 1$, i.e., $\log(q_j/p_j) \to 0$.
Moreover, since $q_j \ll 1$ for typical vocabulary sizes, the softmax locally satisfies $\partial q_j / \partial z_j \approx q_j$, so a single gradient step produces
\begin{align}
\label{equ:tv_self_correct}
\Delta(\log r_j) \approx \Delta z_j =
\begin{cases}
\eta \, q_j (1 - S) > 0 & \text{if } r_j < 1, \\
-\eta \, q_j \, S < 0 & \text{if } r_j > 1,
\end{cases}
\end{align}
where $\eta$ is the learning rate.
The correction magnitude is proportional to $q_j$: tokens with larger probability receive a stronger corrective signal, ensuring that $|\log r_j|$ on the effective support converges to a bounded value $\epsilon$.
Tail tokens ($q_j \approx 0$) receive negligible correction but also contribute negligible TV distance.

This self-correcting dynamics contrasts with CE/KL training, whose gradient $\partial D_{\mathrm{KL}} / \partial z_j = q_j - p_j$ drives \textit{absolute} differences $|q_j - p_j|$ toward zero uniformly, rather than \textit{ratios} $q_j/p_j$ toward one.
Under finite capacity, the CE/KL equilibrium maintains $|q_j - p_j| \lesssim \sigma$ uniformly, which corresponds to $|q_j/p_j - 1| \lesssim \sigma / p_j$---an unbounded ratio for small-$p_j$ tokens in the effective support.

\paragraph{Assumption: bounded logit-ratio error on the effective support.}
The self-correcting property above motivates the following assumption.
Let $q^*_{\mathrm{TV}}$ be the solution reached by TV training under finite draft capacity.
Since the correction magnitude in Eq.~\eqref{equ:tv_self_correct} is proportional to $q_j \approx p_j$ for tokens in the effective support, these tokens receive sufficient gradient signal to drive $\log r_j$ into a bounded interval.

The assumption is stated in \textit{log-ratio} space ($|\log(q/p)| \leq \epsilon$) rather than absolute space ($|q - p| \leq \sigma$) because gradient descent operates on logits $z_j$, and the softmax satisfies $\log q_j = z_j - \log Z$, so each logit update $\Delta z_j$ directly translates to $\Delta(\log q_j) \approx \Delta z_j$.
Since $p$ is fixed, $\Delta(\log r_j) = \Delta(\log q_j) \approx \Delta z_j$: the optimizer's native space is log-ratio, and the equilibrium error is therefore naturally bounded in log-ratio.

We assume: there exists a constant $\epsilon$ such that, for all $j \in \mathcal{S}_{\mathrm{eff}}(p)$,
\begin{align}
\left|\log \frac{q^*_{\mathrm{TV}}(j)}{p_j}\right| \leq \epsilon.
\end{align}
Tail tokens may have larger relative uncertainty but carry negligible probability mass and contribute negligible TV distance.

\paragraph{Deriving the mismatch bound.}
The bounded logit-ratio assumption implies
\begin{align}
e^{-\epsilon}p_j \leq q^*_{\mathrm{TV}}(j) \leq e^{\epsilon}p_j.
\end{align}
Therefore, for every token in the effective support,
\begin{align}
|q^*_{\mathrm{TV}}(j) - p_j|
&= p_j\left|\frac{q^*_{\mathrm{TV}}(j)}{p_j} - 1\right| \\
&\leq p_j\max\{e^{\epsilon}-1,\,1-e^{-\epsilon}\} \\
&= (e^{\epsilon}-1)p_j.
\end{align}
Letting $\delta = e^{\epsilon}-1$, we obtain
\begin{align}
\boxed{|q^*_{\mathrm{TV}}(j) - p_j| \lesssim \delta\,p_j.}
\end{align}
That is, under the bounded-logit-ratio assumption induced by the TV gradient's capacity allocation, TV training yields probability-proportional mismatch ($|q-p| \lesssim \delta \cdot p$) rather than the uniform mismatch ($|q-p| \lesssim \sigma$) of CE/KL training.
In practice, optimizer dynamics (e.g., Adam's second-moment normalization) may partially attenuate the raw $q_j$-proportionality, so the proportional mismatch should be viewed as a modeling approximation rather than an unconditional theorem.

\paragraph{TV distance derivation.}
Under probability-proportional mismatch with constant $\delta$:
\begin{align}
d_{\mathrm{TV}}(p, q^*_{\mathrm{TV}}) &= \frac{1}{2}\sum_{v} |p(v) - q^*(v)| \leq \frac{\delta}{2}\sum_{v} p(v) = \frac{\delta}{2}.
\end{align}

This bound is independent of $\mathcal{H}(p)$, yielding:
\begin{align}
\alpha^{\mathrm{RS}}_{\mathrm{TV}} \geq 1 - \frac{\delta}{2},
\end{align}
which proves Proposition~\ref{prop:tvd_entropy}.

\paragraph{Practical considerations.}
The above analysis assumes $\delta$ is a constant, but in practice, the draft head has finite capacity.
When $\mathcal{H}(p)$ increases, the effective support $|\mathcal{S}_{\mathrm{eff}}(p)| \approx \exp(\mathcal{H}(p))$ grows, and maintaining uniform relative accuracy across more tokens may require more model capacity.
If the draft head's capacity is insufficient, $\delta$ may exhibit weak entropy dependence $\delta = \delta(\mathcal{H})$, reintroducing a residual (but substantially attenuated) entropy--acceptance correlation.
Empirically, the entropy--acceptance slope under TV training is reduced by over $95\%$ compared to CE/KL training (e.g., $-0.06$ vs.\ $-1.68$), confirming that the probability-proportional mismatch largely holds but is not perfect.

\paragraph{Intuition.}
TV training allocates optimization resources proportionally to each token's probability.
When $\mathcal{H}(p)$ is high and the distribution spreads across many tokens, each token receives proportionally less optimization effort, but also carries proportionally less weight in the TV distance.
These two effects largely cancel, making the entropy--acceptance relationship substantially weaker than under CE/KL training.

\section{Rejection Sampling Decision Boundary Derivation}
\label{app:rs_boundary}

We derive the condition under which rejection sampling achieves a higher acceptance rate than target-only sampling.

\paragraph{Acceptance rates.}
Under target-only sampling, the acceptance rate is $\alpha^{\mathrm{TO}} = p(\hat{y})$, where $\hat{y} = \arg\max_y q(y)$ is the draft's top-1 token.
Under rejection sampling, the acceptance rate is:
\begin{align}
\alpha^{\mathrm{RS}} = \sum_{v \in \mathcal{V}} \min\bigl(p(v), q(v)\bigr).
\end{align}

\paragraph{Decomposing $\alpha^{\mathrm{RS}}$.}
Using the identity $\min(a, b) = \frac{1}{2}(a + b - |a - b|)$ and the normalization $\sum_v p(v) = \sum_v q(v) = 1$:
\begin{align}
\alpha^{\mathrm{RS}} &= \sum_v \frac{p(v) + q(v) - |p(v) - q(v)|}{2} \\
&= \frac{1}{2}\!\sum_v p(v) + \frac{1}{2}\!\sum_v q(v) - \frac{1}{2}\!\sum_v |p(v) - q(v)| \\
&= 1 - d_{\mathrm{TV}}(p, q).
\end{align}

\paragraph{Decision boundary.}
RS outperforms target-only when $\alpha^{\mathrm{RS}} > \alpha^{\mathrm{TO}}$:
\begin{align}
1 - d_{\mathrm{TV}}(p, q) > p(\hat{y}) \quad \Longleftrightarrow \quad d_{\mathrm{TV}}(p, q) < 1 - p(\hat{y}).
\end{align}
This reduces the comparison between the two acceptance methods to a simple inequality: RS is preferred whenever the draft--target TVD is smaller than the target probability mass outside the draft's greedy prediction.
Since $1 - p(\hat{y}) \geq 1 - \max_y p(y) > 0$ for any non-degenerate distribution, there always exists a sufficiently well-aligned draft for which RS is beneficial.

\section{Fused TV Loss Kernel}
\label{app:tv_kernel}

We provide the pseudocode for our fused TV loss implementation.
The forward pass (Algorithm~\ref{alg:tv_fwd}) computes the per-token TV loss and the auxiliary quantity $S$ needed by the backward pass in a single kernel launch.
The backward pass (Algorithm~\ref{alg:tv_bwd}) computes gradients with respect to the draft logits.
Both kernels iterate over the vocabulary in tiles of size \texttt{BLOCK\_V} to bound register and shared-memory usage, enabling full-vocabulary TV loss computation without materializing the softmax output.

\begin{algorithm}[h]
\caption{TV Loss Forward Kernel (per token position)}
\label{alg:tv_fwd}
\begin{algorithmic}[1]
\Require Draft logits $z \in \mathbb{R}^{|\mathcal{V}|}$, target log-probs $\log p \in \mathbb{R}^{|\mathcal{V}|}$
\Ensure TV loss $\ell$, auxiliary scalar $S$
\State \textit{// Pass 1: numerically stable softmax denominator}
\State $m \gets \max_v z_v$ \Comment{global logit max}
\State $D \gets \sum_v \exp(z_v - m)$ \Comment{exp-sum}
\State \textit{// Pass 2: tiled overlap and $S$ accumulation}
\State $\textit{overlap} \gets 0$; \quad $S \gets 0$
\For{$v_{\text{start}} = 0$ \textbf{to} $|\mathcal{V}|$ \textbf{step} \texttt{BLOCK\_V}}
    \State $\mathbf{v} \gets [v_{\text{start}},\ldots,v_{\text{start}}+\texttt{BLOCK\_V}{-}1]$
    \State $\mathbf{q} \gets \exp(\mathbf{z}[\mathbf{v}] - m) \;/\; D$ \Comment{draft prob}
    \State $\mathbf{p} \gets \exp(\log\mathbf{p}[\mathbf{v}])$ \Comment{target prob}
    \State $\textit{overlap} \mathrel{+}= \sum \min(\mathbf{q},\; \mathbf{p})$
    \State $S \mathrel{+}= \sum \mathbf{q} \cdot \mathbbm{1}[\mathbf{q} \leq \mathbf{p}]$
\EndFor
\State $\ell \gets \text{clamp}(1 - \textit{overlap},\; 0,\; \tau_{\max})$ \Comment{$\tau_{\max}$: optional clamp}
\State \Return $\ell,\; S$
\end{algorithmic}
\end{algorithm}

\begin{algorithm}[h]
\caption{TV Loss Backward Kernel (per token position)}
\label{alg:tv_bwd}
\begin{algorithmic}[1]
\Require Draft logits $z$, target log-probs $\log p$, cached $(m, D, S, g_{\text{out}})$
\Ensure Gradient $\nabla_z \ell \in \mathbb{R}^{|\mathcal{V}|}$
\For{$v_{\text{start}} = 0$ \textbf{to} $|\mathcal{V}|$ \textbf{step} \texttt{BLOCK\_V}}
    \State $\mathbf{v} \gets [v_{\text{start}},\ldots,v_{\text{start}}+\texttt{BLOCK\_V}{-}1]$
    \State $\mathbf{q} \gets \exp(\mathbf{z}[\mathbf{v}] - m) \;/\; D$
    \State $\mathbf{p} \gets \exp(\log\mathbf{p}[\mathbf{v}])$
    \State $\nabla_z[\mathbf{v}] \gets \mathbf{q} \cdot (S - 1 + \mathbbm{1}[\mathbf{q} > \mathbf{p}]) \cdot g_{\text{out}}$
\EndFor
\State \Return $\nabla_z$
\end{algorithmic}
\end{algorithm}

\paragraph{Implementation notes.}
(1) The forward kernel fuses the softmax normalization with the TV overlap computation, avoiding a separate $O(|\mathcal{V}|)$ softmax pass.
(2) For tensor-parallel training, $m$ and $D$ are computed via \texttt{all\_reduce} across TP ranks before the overlap pass; the local overlaps and $S$ values are similarly reduced after computation.
(3) The optional \texttt{top-K} path selects the $K$ largest draft logits and computes TV/gradients only at those positions, reducing memory from $O(|\mathcal{V}|)$ to $O(K)$ with negligible accuracy loss (since the gradient $\propto q_j \approx 0$ for tail tokens).

\section{Rejection Sampling Inference Implementation}
\label{app:rs_inference}

Implementing rejection sampling for MTP-based speculative decoding in production inference engines requires modifying both the draft and verification stages.
Unlike target-only sampling, which selects draft tokens via $\arg\max$ and accepts based solely on the target probability, rejection sampling requires (1) sampling draft tokens from the draft distribution $q$ (rather than taking the argmax), (2) caching the draft probabilities for use during verification, and (3) computing the acceptance ratio $\min(1, p(\hat{y})/q(\hat{y}))$ during verification.
We describe two different implementation strategies as follows.

\subsection{Multinomial Draft Sampling (SGLang)}
\label{app:rs_sglang}

The first approach, implemented in SGLang\footnote{\url{https://github.com/sgl-project/sglang/pull/26312}}, directly samples draft tokens from the draft distribution using multinomial sampling.

\paragraph{Draft stage.}
Instead of selecting draft tokens via $\hat{y} = \arg\max_y q(y)$, we apply temperature scaling to the draft logits and sample $\hat{y} \sim q(\cdot)$ via multinomial sampling.
The full draft probability vector $q \in \mathbb{R}^{|\mathcal{V}|}$ is cached alongside each draft token for use during verification.

\paragraph{Verification stage.}
Given a chain of $\gamma$ draft tokens $\hat{y}_1, \ldots, \hat{y}_\gamma$ with cached draft probabilities $q_1, \ldots, q_\gamma$, and the target probabilities $p_1, \ldots, p_\gamma$ obtained from the single-pass target model verification, we implement rejection sampling via a fused Triton kernel.
The kernel processes each request independently (one Triton program per request) and performs two phases:

\begin{enumerate}[leftmargin=8mm, itemsep=1mm]
    \item \textbf{Sequential acceptance}: For each draft step $i = 1, \ldots, \gamma$, draw $u_i \sim \mathrm{Uniform}(0, 1)$ and accept $\hat{y}_i$ if
    $u_i \cdot q_i(\hat{y}_i) < p_i(\hat{y}_i)$,
    i.e., with probability $\min\!\bigl(1,\, p_i(\hat{y}_i) / q_i(\hat{y}_i)\bigr)$.
    Stop at the first rejection.

    \item \textbf{Residual resampling}: If draft token $\hat{y}_j$ is rejected at step $j$, or if all $\gamma$ drafts are accepted (bonus token case), sample the next token from the residual distribution.
    For rejection at step $j$, the residual distribution is $p_{\mathrm{resid}}(v) \propto \max(0, p_j(v) - q_j(v))$; for the bonus token (all accepted), the residual is simply $p_{\gamma}(v)$.
    The kernel computes this via a two-pass CDF inversion over the vocabulary: Pass~1 computes the normalization constant $Z = \sum_v \max(0, p_j(v) - q_j(v))$, and Pass~2 finds the token $v^*$ such that the cumulative sum first exceeds $u \cdot Z$ for a uniform random $u$.
\end{enumerate}

\begin{algorithm}[h]
\caption{Chain Rejection Sampling Verification (Multinomial / SGLang)}
\label{alg:rs_multinomial}
\begin{algorithmic}[1]
\Require Draft tokens $\hat{y}_1, \ldots, \hat{y}_\gamma$; draft probs $q_1, \ldots, q_\gamma \in \mathbb{R}^{|\mathcal{V}|}$; target probs $p_1, \ldots, p_\gamma \in \mathbb{R}^{|\mathcal{V}|}$
\Ensure Accepted token count $n$; output token $y^*$ at position $n+1$
\State $n \gets \gamma$ \Comment{assume all accepted}
\For{$i = 1$ \textbf{to} $\gamma$}
    \State $u_i \sim \mathrm{Uniform}(0, 1)$
    \If{$u_i \cdot q_i(\hat{y}_i) \geq p_i(\hat{y}_i)$} \Comment{reject}
        \State $n \gets i - 1$; \textbf{break}
    \EndIf
\EndFor
\State \textit{// Residual resampling via two-pass CDF inversion}
\If{$n < \gamma$} \Comment{rejected at step $n+1$}
    \State $r(v) \gets \max\bigl(0,\; p_{n+1}(v) - q_{n+1}(v)\bigr)$ for all $v$
\Else \Comment{bonus token}
    \State $r(v) \gets p_\gamma(v)$ for all $v$
\EndIf
\State $Z \gets \sum_v r(v)$ \Comment{Pass 1: normalization}
\State $u \sim \mathrm{Uniform}(0, 1)$
\State $y^* \gets \min\bigl\{v : \sum_{v' \leq v} r(v') \geq u \cdot Z\bigr\}$ \Comment{Pass 2: CDF inversion}
\State \Return $n,\; y^*$
\end{algorithmic}
\end{algorithm}

\paragraph{Memory overhead.}
The primary overhead is caching the draft probability vectors: $O(\gamma \times |\mathcal{V}|)$ per request, where $\gamma$ is the number of MTP steps.

\subsection{Gumbel-Max Trick (vLLM)}
\label{app:rs_vllm}

The second approach, implemented in vLLM\footnote{\url{https://github.com/vllm-project/vllm/pull/35461}}, avoids explicit CDF inversion during residual resampling by leveraging the Gumbel-Max trick.

\paragraph{Draft stage.}
Draft tokens are sampled using the Gumbel-Max trick: for each vocabulary token $v$, compute $v^* = \arg\max_v [\log q(v) / \tau + G_v]$, where $G_v \sim \mathrm{Gumbel}(0, 1)$ is i.i.d.\ Gumbel noise and $\tau$ is the sampling temperature.
This is equivalent to sampling from $q$ after temperature scaling.
The temperature-scaled draft logits (before adding Gumbel noise) are cached for verification.

\paragraph{Verification stage.}
The verification is split into two kernels:

\begin{enumerate}[leftmargin=8mm, itemsep=1mm]
    \item \textbf{Acceptance kernel}: A sequential Triton kernel iterates over draft steps, computing $p(\hat{y}_i)$ and $q(\hat{y}_i)$ from the cached target and draft probabilities, and accepting if $u_i \cdot q(\hat{y}_i) < p(\hat{y}_i)$ for a pseudo-random $u_i$ generated via \texttt{tl.rand} seeded by the request's random seed and position.
    The kernel records the index of the first rejected step.

    \item \textbf{Residual logits kernel}: A parallel Triton kernel computes the residual distribution in logit space.
    For rejection at step $j$: $z_{\mathrm{resid}}(v) = \log \max(0, p_j(v) - q_j(v))$; for the bonus token: $z_{\mathrm{resid}}(v) = z_{\mathrm{target},\gamma}(v)$ (the raw target logits).
    The resampled token is then drawn from this residual distribution using the same Gumbel-Max sampling as the draft stage.
\end{enumerate}

\begin{algorithm}[h]
\caption{Chain Rejection Sampling Verification (Gumbel-Max / vLLM)}
\label{alg:rs_gumbel}
\begin{algorithmic}[1]
\Require Draft tokens $\hat{y}_1, \ldots, \hat{y}_\gamma$; draft logits $z^q_1, \ldots, z^q_\gamma \in \mathbb{R}^{|\mathcal{V}|}$; target probs $p_1, \ldots, p_\gamma \in \mathbb{R}^{|\mathcal{V}|}$; target logits $z^p_\gamma$
\Ensure Accepted token count $n$; output token $y^*$ at position $n+1$
\State \textit{// Kernel 1: sequential acceptance}
\State $n \gets \gamma$
\For{$i = 1$ \textbf{to} $\gamma$}
    \State $q_i(\hat{y}_i) \gets \mathrm{softmax}(z^q_i)_{\hat{y}_i}$
    \State $u_i \gets \texttt{tl.rand}(\textit{seed}, i)$
    \If{$u_i \cdot q_i(\hat{y}_i) \geq p_i(\hat{y}_i)$}
        \State $n \gets i - 1$; \textbf{break}
    \EndIf
\EndFor
\State \textit{// Kernel 2: residual logits}
\If{$n < \gamma$}
    \State $z_{\mathrm{resid}}(v) \gets \log \max\bigl(0,\; p_{n+1}(v) - q_{n+1}(v)\bigr)$ for all $v$
\Else
    \State $z_{\mathrm{resid}}(v) \gets z^p_\gamma(v)$ for all $v$
\EndIf
\State \textit{// Gumbel-Max resampling}
\State $G_v \sim \mathrm{Gumbel}(0, 1)$ for all $v$
\State $y^* \gets \arg\max_v \bigl[z_{\mathrm{resid}}(v) + G_v\bigr]$
\State \Return $n,\; y^*$
\end{algorithmic}
\end{algorithm}

\end{document}